\documentclass[10pt,twocolumn,letterpaper]{article}

\usepackage[pagenumbers]{cvpr} 

\definecolor{cvprblue}{rgb}{0.21,0.49,0.74}
\usepackage[pagebackref,breaklinks,colorlinks,allcolors=cvprblue]{hyperref}
    
\usepackage{multirow}
\usepackage{amsfonts}
\usepackage{amsmath}
\usepackage{amsthm}
\usepackage[normalem]{ulem}
\usepackage{algorithm}
\usepackage{arydshln}
\usepackage{algpseudocode}

\newtheorem{proposition}{Proposition}

\newcommand{\grantsponsor}[1]{#1}
\newcommand{\grantnumber}[2]{$^{#1}$#2}  

\def\paperID{34875} 
\def\confName{CVPR}
\def\confYear{2026}

\title{Adaptive Auxiliary Prompt Blending for Target-Faithful Diffusion Generation}

\author{Kwanyoung Lee \quad SeungJu Cha \quad Yebin Ahn \quad Hyunwoo Oh \quad Sungho Koh \quad Dong-Jin Kim\\
Hanyang University, South Korea\\
{\tt\small \{mobled37, sju9020, yeshine, komjii, ksh000906, djdkim\}@hanyang.ac.kr}
}

\begin{document}
\twocolumn[{
\renewcommand\twocolumn[1][]{#1}
\maketitle
}]
\begin{abstract}

Diffusion-based text-to-image (T2I) models have made remarkable progress in generating photorealistic and semantically rich images.
However, when the target concepts lie in low-density regions of the training distribution, these models often produce semantically misaligned or structurally inconsistent results.
This limitation arises from the long-tailed nature of text-image datasets, where rare concepts or editing instructions are underrepresented.
To address this, we introduce Adaptive Auxiliary Prompt Blending (AAPB) — a unified framework that stabilizes the diffusion process in low-density regions.
AAPB leverages auxiliary anchor prompts to provide semantic support in rare concept generation and structural support in image editing, ensuring faithful guidance toward the target prompt.
Unlike prior heuristic prompt alternation methods, AAPB derives a closed-form adaptive coefficient that optimally balances the influence between the auxiliary anchor and the target prompt at each diffusion step.
Grounded in Tweedie's identity, our formulation provides a principled and training-free framework for adaptive prompt blending, ensuring stable and target-faithful generation.
We demonstrate the effectiveness of adaptive interpolation over fixed interpolation through controlled experiments and empirically show consistent improvements on the RareBench and FlowEdit datasets, achieving superior semantic accuracy and structural fidelity compared to prior training-free baselines. 
Code is available \href{https://github.com/mobled37/AAPB}{here}.

\end{abstract}    
\section{Introduction}

Text-to-image (T2I) diffusion models have achieved remarkable progress in generating photorealistic and semantically rich images from natural language descriptions~\cite{rombach2022high, podell2023sdxl, esser2024scaling}.
Recent advances in large-scale training have yielded impressive fidelity across diverse visual domains and styles~\cite{blackforestlabs2024flux, chen2023pixart, zhang2024itercomp}.
However, a persistent challenge remains: when the target concepts lie in \textit{low-density regions} of the training distribution, these models often fail to produce faithful or structurally consistent results~\cite{samuel2024generating}.

\begin{figure}
    \centering
    \includegraphics[width=1\linewidth]{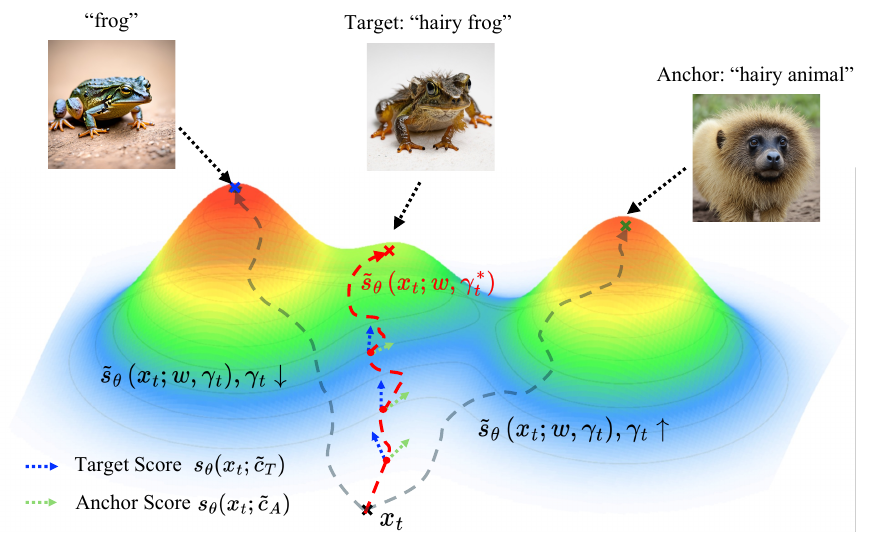}
    \caption{When the target concept lies in a low-density region, the generated samples tend to drift toward semantically dominant, high-density concepts~\cite{um2023don} in the learned score space, resulting in the suppression of rare or compositional attributes. Our proposed adaptive coefficient $\gamma_t^{*}$, derived from auxiliary prompt blending between $\tilde{c}_T$ and its anchor $\tilde{c}_A$, dynamically corrects this bias and produces target-faithful results. Unlike a fixed coefficient $\gamma_t$, the adaptive $\gamma_t^{*}$ adjusts per timestep to maintain a target-aligned denoising trajectory.}
    \label{fig:teaser}
\end{figure}

This problem stems from the inherently long-tailed nature of text–image datasets, where common concepts (e.g., ``frog'', ``cat'') dominate, while rare or compositional ones (e.g., ``hairy frog'', ``origami cat'') appear sparsely or not at all~\cite{park2024rare, samuel2024generating}.
Consequently, the learned score function becomes under-constrained in these regions, causing generations to drift toward semantically dominant concepts~\cite{um2023don}.
This issue manifests across two related settings: (1) in \textit{rare concept generation}~\cite{park2024rare, samuel2024generating}, models fail to express target-specific attributes, and (2) in \textit{image editing}~\cite{kulikov2024flowedit, yang2024text, rout2024semantic, meng2021sdedit}, models struggle to preserve the original structure under uncommon or complex edits.

To mitigate these failures, prior works leverage auxiliary prompts as stabilizing anchors—using semantically related frequent prompts for rare concept generation~\cite{park2024rare} and the original source prompt for image editing~\cite{kulikov2024flowedit}.
However, the key challenge lies in balancing the influence between the target prompt and its auxiliary anchor throughout the diffusion process.
As shown in Fig.~\ref{fig:teaser}, an improper balance leads to two failure modes: over-reliance on the anchor suppresses the target semantics, whereas insufficient anchoring yields unstable generations that deviate from the intended concept.
This motivates the need for a principled formulation of how auxiliary and target prompts should be blended across timesteps.

To address this, we introduce \textbf{Adaptive Auxiliary Prompt Blending (AAPB)}, a training-free framework that stabilizes diffusion generation by adaptively modulating the contribution of the auxiliary anchor throughout the denoising process.
Prior methods, such as R2F~\cite{park2024rare}, address this trade-off by using fixed and heuristic prompt schedules that vary across prompts and tasks.
In contrast, we derive a \textit{closed-form adaptive coefficient} that dynamically balances the contributions of the target and anchor at each diffusion step, ensuring both semantic precision and structural stability.
Building on Tweedie's identity~\cite{Robbins1992}, our goal is to align the posterior mean of the blended denoised estimate toward that of the target prompt.
Then, we derive a closed-form adaptive coefficient that minimizes the distance between the posterior mean, ensuring the blended denoising trajectory remains aligned with the target concept rather than drifting toward high-density modes.
This framework naturally unifies rare concept generation and image editing, both of which involve target concepts that lie in low-density regions, under a single adaptive optimization principle.

We build intuition for our approach through a controlled toy example and a supporting proposition showing that the adaptive coefficient achieves a lower squared 2-Wasserstein distance than fixed interpolation under idealized conditions, followed by empirical validation on real-world tasks.
We validate AAPB across two distinct tasks—rare concept generation on RareBench~\cite{park2024rare} and image editing on FlowEdit~\cite{kulikov2024flowedit} dataset—and observe substantial improvements over prior training-free baselines, notably enhancing semantic accuracy for rare concepts and delivering more structure-preserving edits.
The main contributions of this work are summarized as follows:

\begin{enumerate}
    \item We introduce \textbf{AAPB}, a unified and training-free approach for target-faithful diffusion generation in low-density regions.
    \item We derive a \textbf{closed-form adaptive coefficient} that replaces heuristic anchor scheduling by optimally balancing target and anchor guidance through Tweedie-based score alignment.
    \item We provide theoretical insight via a controlled toy example and proposition showing that adaptive blending yields a lower squared 2-Wasserstein distance than fixed interpolation, and verify this behavior empirically.
    \item We demonstrate strong and consistent gains on RareBench and FlowEdit datasets, improving semantic accuracy and structural fidelity across both rare concept generation and image editing tasks.
\end{enumerate}

\section{Related Works} 

\noindent \textbf{Rare Concept Generation.}
While large-scale pre-trained diffusion models~\cite{rombach2022high, podell2023sdxl, esser2024scaling} demonstrate strong visual quality, they still struggle to synthesize semantically rare or visually underrepresented concepts. 
SynGen~\cite{rassin2023linguistic} tackles fine-grained misalignment via cross-attention modulation but remains confined to frequent concepts seen during training. 
LLM-grounded diffusion frameworks~\cite{lian2023llm, yang2024mastering, hu2024ella} improve prompt understanding and compositional reasoning, yet still fail to generalize to tail concepts, causing unstable or inconsistent generations.
Recent works, SeedSelect~\cite{samuel2024generating} and R2F~\cite{park2024rare}, leverage head or frequent samples to improve rare concept synthesis—SeedSelect in the image space by retrieving optimal noise seeds, and R2F in the semantic space via LLM-based frequent prompts. 
However, both rely on external references or fixed LLM schedules, limiting zero-shot generalization. 
Specifically, R2F alternates between target and anchor at the prompt level, preventing continuous score-space interpolation and per-timestep adaptivity. 
In contrast, we operate directly in score space, enabling closed-form adaptive coefficients to balance contributions per timestep.

\noindent \textbf{Image Editing.}
Training-free editing methods such as SDEdit~\cite{meng2021sdedit} perturb an input image with Gaussian noise and denoise it via diffusion priors, but often suffer from structural degradation.
Deterministic ODE inversion~\cite{rout2024semantic} improves reconstruction by reversing diffusion trajectories, yet demands fine discretization or prompt optimization for controllable edits.
iRFDS~\cite{yang2024text} leverages rectified flow distillation to achieve inversion and editing without stochastic noise, but struggles with local attribute consistency.
FlowEdit~\cite{kulikov2024flowedit} employs an inversion-free ODE that effectively maintains structural consistency, although it may lose original content structure under stronger or more complex edit instructions.
Building on this line, our AAPB framework targets \textit{zero-shot} semantic editing with strong structural fidelity.
We adopt the inversion-free formulation of FlowEdit as our foundation and introduce adaptive auxiliary prompt blending in the score space to achieve prompt-faithful, structure-preserving generation without retraining or optimization.

\section{Backgrounds}

Before delving into details of our work, we briefly review several key concepts related to diffusion-based generative models. 
We provide an overview of diffusion models with an emphasis on their score-based formulation~\cite{song2020score}, which serves as the foundation of our framework.
We next review two key components that ground our method: Tweedie’s denoising formula~\cite{Robbins1992} and the classifier-free guidance (CFG) mechanism~\cite{ho2022classifier}, which together form the theoretical basis of adaptive prompt blending.

\subsection{Diffusion Models}

Diffusion-based generative models~\cite{ho2020denoising, song2019generative, song2020score} 
define a forward noising process that gradually perturbs clean data $x_0 \sim p_{\text{data}}$ into Gaussian noise 
through a Markov chain. At timestep $t$, the forward process admits the closed form
\begin{equation}
q_\alpha(x_t \mid x_0) = \mathcal{N}\!\left(x_t; \sqrt{\alpha_t}x_0, (1-\alpha_t)I \right),
\label{eq:forward}
\end{equation}
where $\alpha_t = \prod_{s=1}^t (1-\beta_s)$ with $\{\beta_s\}_{s=1}^T$ the variance schedule.  
As $t \to T$, $x_T$ approaches the standard Gaussian prior, $x_T \sim \mathcal{N}(0,I)$.

The generative process corresponds to reversing this Markov chain, which requires access to the score function $\nabla_{x_t} \log q_\alpha(x_t \mid x_0)$. 
In practice, one trains a neural score network $s_\theta(x_t, t)$ to approximate the true score. 
Following denoising score matching~\cite{song2020score}, the training objective is a weighted sum of per-timestep score errors. 
For simplicity, we write $s_\theta(x_t)$ for $s_\theta(x_t, t)$.

For notational convenience, we define
\begin{equation}
\mathbb{E}_t[\cdot] := \mathbb{E}_{x_0 \sim p_{\text{data}},\, x_t \sim q_t(\cdot \mid x_0)}[\cdot],
\end{equation}
where $q_t(x_t \mid x_0)$ denotes the forward noising distribution at timestep $t$.
With this shorthand, the objective becomes
\begin{equation}
\min_\theta \; \sum_{t=1}^T w_t \, 
\mathbb{E}_t \Big[\, \| s_\theta(x_t) - \nabla_{x_t} \log q_t(x_t \mid x_0) \|_2^2 \,\Big],
\label{eq:score-matching}
\end{equation}
where $w_t = 1-\alpha_t$ is a time-dependent weighting factor. With sufficient data and model capacity, the optimal solution $s_\theta^\ast$ recovers the exact perturbed score function $s_\theta^\ast(x_t) = \nabla_{x_t}\log q_\alpha(x_t)$ almost everywhere~\cite{um2023don}.

\subsection{Tweedie’s Formula in Diffusion Models}
\label{sec:background_tweedie}

\emph{Tweedie’s formula}~\cite{Robbins1992} defines the Bayes-optimal denoiser under Gaussian corruption as
\begin{equation}
\mathbb{E}[x_0 \mid x_t] = x_t + (1-\alpha_t) \nabla_{x_t}\log p(x_t),
\label{eq:tweedie}
\end{equation}
which represents the posterior mean—the minimum-mean-square-error (MMSE) estimate of the clean sample given its noisy observation.
Given the forward process $q_\alpha(x_t|x_0)$ in Eq.~\eqref{eq:forward}, the true score can be expressed as  
\begin{equation}
s^*(x_t) = \mathbb{E}_{q_\alpha(x_0|x_t)}[\nabla_{x_t}\log q_\alpha(x_t|x_0)],
\end{equation}
where the expectation is taken over $q_\alpha(x_0|x_t) \propto q_\alpha(x_t|x_0)p(x_0)$.  
As Um et al.~\cite{um2023don} point out, this expectation inherently biases the posterior mean toward majority features of $p(x_0)$ because frequent samples dominate the conditional averaging process.
Consequently, denoised estimates $\mathbb{E}[x_0|x_t]$ tend to drift toward high-density regions of the data distribution.

\subsection{Classifier-Free Guidance (CFG)}
Let $t$ denote the diffusion timestep and $s_\theta(x_t)$ the unconditional score estimator.
\emph{Classifier-Free Guidance} (CFG)~\cite{ho2022classifier} jointly trains conditional and unconditional score estimators within a shared network, where the unconditional branch is obtained by randomly dropping the condition token during training. During inference, the two scores are linearly combined as
\begin{equation}
\tilde{s}_{\theta}^{\text{CFG}}(x_t, c; w)
=
ws_{\theta}(x_t, c)
+
(1-w)s_{\theta}(x_t),
\label{eq:cfg-classic}
\end{equation}
where $w$ controls the strength of conditional guidance. 
Increasing $w$ biases the generation toward well-learned, high-likelihood regions of the conditional distribution $p(x|c)$, thereby improving semantic alignment~\cite{karras2024guiding, um2023don}.

\begin{figure*}[ht!]
    \centering
    \includegraphics[width=1\linewidth]{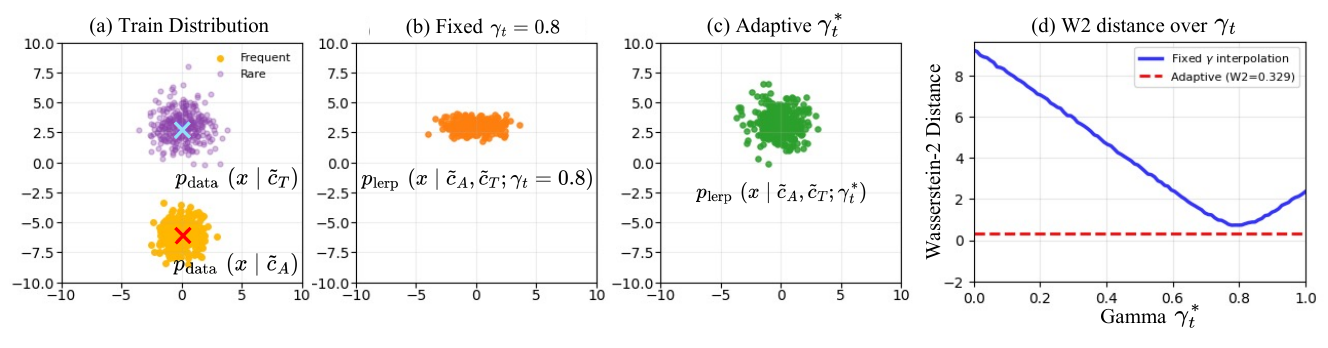}
    \caption{
    \textbf{Toy Example of the target concept generation.} (a) Training distributions: frequent samples $\mathcal{N}((0, -6), I)$ (orange) and rare-prior samples $\mathcal{N}((0, 3), 1.5I)$ (purple) with mean positions marked by crosses; (b) Generated samples using fixed linear interpolation $p_{\text{lerp}}(x|\tilde{c}_A, \tilde{c}_T; \gamma_t = 0.8)$; (c) Generated samples using adaptive interpolation $p(x| \tilde{c}_A,\tilde{c}_T; \gamma^*_t)$; (d) 2-Wasserstein distance between generated distributions and the target $\mathcal{N}((0, 3), 1.5I)$ as a function of interpolation parameter $\gamma_t$ (blue line). The distance curve shows that while fixed interpolation achieves a minimum distance around $\gamma_t \approx 0.8$, the adaptive method (red dashed line) consistently outperforms any fixed interpolation choice.}
    \label{fig:toyexample}
\end{figure*}

\section{Method}

Our goal is to enhance low-density diffusion generation by introducing an adaptive blending mechanism that blends the target prompt with an auxiliary anchor prompt. 
We refer to this unified framework as \textbf{Adaptive Auxiliary Prompt Blending (AAPB)}. 
AAPB generalizes classifier-free guidance by adaptively modulating the contribution of an auxiliary anchor—either a semantically aligned frequent prompt (in rare concept generation) or the unedited source prompt (in image editing)—at each diffusion step. 
At the core of AAPB is a per-timestep adaptive coefficient $\gamma_t^\ast$, which governs the balance between target fidelity and anchor stability.
We denote the target prompt by $\tilde{c}_T$ and the auxiliary anchor prompt by $\tilde{c}_A$. 
In rare concept generation, $(\tilde{c}_T,\tilde{c}_A)$ represent the rare target concept and its frequent anchor~\cite{park2024rare}.
In image editing, $(\tilde{c}_T,\tilde{c}_A)$ correspond to the edited target prompt and the original source prompt, respectively.

Recall that standard CFG~\cite{ho2022classifier} combines unconditional and conditional scores as in Eq.~\eqref{eq:cfg-classic}. 
We generalize this by redefining the conditional score as a linear blending between the target and auxiliary anchor scores:
\begin{equation}
s_\theta(x_t,c) = (1-\gamma_t)\,s_\theta(x_t,\tilde{c}_T) + \gamma_t\,s_\theta(x_t,\tilde{c}_A),
\label{eq:interpolated_score}
\end{equation}
where $\gamma_t$ is the blending coefficient.
Substituting Eq.~\eqref{eq:interpolated_score} into Eq.~\eqref{eq:cfg-classic} yields our blended guidance formulation:
\begin{equation}
\begin{aligned}
&\tilde{s}_\theta(x_t; w, \gamma_t) 
= s_\theta(x_t) \\&+ w\big((1-\gamma_t)\,s_\theta(x_t,\tilde{c}_T) 
+ \gamma_t\,s_\theta(x_t,\tilde{c}_A) - s_\theta(x_t)\big),
\label{eq:blended_score}
\end{aligned}
\end{equation}
where $s_\theta(x_t)$ is the unconditional score, $s_\theta(x_t,\tilde{c}_T)$ the target-conditioned score, and $s_\theta(x_t,\tilde{c}_A)$ the auxiliary anchor-conditioned score.

\subsection{Posterior Mean Alignment}

Building on the analysis in Sec.~\ref{sec:background_tweedie}, we leverage the drift of Tweedie’s posterior mean toward high-density regions of the data distribution as the key insight for our loss formulation.
R2F~\cite{park2024rare} has shown that leveraging a semantically related frequent concept as an auxiliary anchor prompt during generation reduces the gap between generated and real data distributions.
However, because the blended score linearly combines target and anchor guidance, we conjecture that its corresponding posterior mean also shifts toward the anchor’s majority region while remaining distant from the target data distribution. 
To explicitly reduce this bias, we define a regularization loss based on the posterior-mean distance between the blended denoiser $\tilde{\mu}_\theta$ and the target denoiser $\mu_\theta^T$:

\begin{equation}
\mathcal{L}_{\text{reg}}(\gamma_t) = \|\tilde{\mu}_\theta(x_t;w,\gamma_t)-\mu_\theta^T(x_t)\|_2^2.
\label{eq:image_domain_loss}
\end{equation}

To make this objective tractable, we apply Tweedie's formula (Eq.~\eqref{eq:tweedie}) to express both $\tilde{\mu}_\theta$ and $\mu_\theta^T$ in terms of their respective score functions~\cite{luo2022understanding}:
\begin{equation}
\mu_{\theta}\left(x_t\right)=\frac{1}{\sqrt{\alpha_t}} \left( x_t - \sqrt{1-\alpha_t} \cdot \epsilon_{\theta}\left(x_t\right) \right),
\end{equation}
where the predicted noise $\epsilon_\theta(x_t)$ is directly related to the score via the identity $s_\theta(x_t) = -\epsilon_\theta(x_t)/\sqrt{1-\alpha_t}$. 
More critically, this implies that the squared error between any two score functions is directly proportional to the squared error between their corresponding denoisers:
\begin{equation}
\begin{aligned}
\|\tilde{\mu}_\theta(x_t;&w,\gamma_t)-\mu_\theta^T(x_t)\|_2^2 
\\ &= \frac{(1-\alpha_t)^2}{\alpha_t}\|\tilde{s}_\theta(x_t;w,\gamma_t)-s_\theta(x_t,\tilde{c}_T)\|_2^2,
\label{eq:tweedie-identity}
\end{aligned}
\end{equation}
where $\tilde{c}_T$ is the target prompt.

This equivalence is central to our framework: it establishes that optimizing for target-faithful generation in image space is equivalent to minimizing score-space error.
Since the pre-factor $\alpha_t$ depends only on the noise schedule and does not affect the optimal $\gamma_t$, we can directly optimize:
\begin{equation}
\mathcal{L}(\gamma_t)=\|\tilde{s}_\theta(x_t;w,\gamma_t)-s_\theta(x_t,\tilde{c}_T)\|_2^2.
\label{eq:consistency-loss}
\end{equation}
While this equivalence is a direct algebraic consequence of Tweedie's formula, its conceptual importance is central to our work: it provides the formal justification for our score-space objective.

\begin{table*}[t]
\centering
\resizebox{0.8\textwidth}{!}{
\centering
\begin{tabular}{l|ccccc|ccc|c}
\hline
\multicolumn{1}{c|}{{}} & \multicolumn{5}{c|}{Single Object}                                                                             & \multicolumn{3}{c|}{Multi Objects} & \multicolumn{1}{c}{}                                                  \\
\multicolumn{1}{c|}{Models}                        & Property      & Shape         & Texture & Action & \begin{tabular}[c]{@{}c@{}}Complex \end{tabular} & Concat & Relation      & \begin{tabular}[c]{@{}c@{}}Complex\end{tabular} & Avg \\ \hline
SD1.5~\cite{rombach2022high}                                         & 55.0            & 38.8          & 33.8    & 23.1   & 36.9                                                        & 23.1   & 24.4          & 36.3     & 33.9                                                 \\
SDXL~\cite{podell2023sdxl}                                          & 60.0            & 56.9          & 71.3    & 47.5   & 58.1                                                        & 39.4   & 35.0            & 47.5    & 52.0                                                  \\
PixArt~\cite{chen2023pixart}                                       & 49.4          & 58.8          & 76.9    & 56.3   & 63.1                                                        & 35.6   & 30.0            & 48.1         &52.3                                             \\
SD3.0~\cite{esser2024scaling}                                        & 49.4          & 76.3          & 53.1    & 71.9   & 65.0                                                          & 55.0     & 51.2          & 70.0            & 61.5                                            \\
FLUX~\cite{blackforestlabs2024flux}                                     & 58.1          & 71.9          & 47.5    & 52.5   & 60.0                                                         & 55.0     & 48.1          & 70.3                 &57.9                                     \\ \hline
SynGen~\cite{rassin2023linguistic}                                       & 61.3          & 59.4          & 54.4    & 33.8   & 50.6                                                        & 30.6   & 33.1          & 29.4                   & 44.1                                   \\ \hline
LMD~\cite{lian2023llm}                                          & 23.8          & 35.6          & 27.5    & 23.8   & 35.6                                                        & 33.1   & 34.4          & 33.1   & 30.9                                                   \\
RPG~\cite{yang2024mastering}                                          & 33.8          & 54.4          & 66.3    & 31.9   & 37.5                                                        & 21.9   & 15.6          & 29.4   & 36.4                                                   \\
ELLA~\cite{hu2024ella}                                         & 31.3          & 61.6          & 64.4    & 43.1   & 66.3                                                        & 42.5   & 50.6          & 51.9                & 51.5                                      \\
R2F (SD3)~\cite{park2024rare}                                    & \underbar{89.4} & \underbar{79.4}    & \underbar{81.9}    & \underbar{80.0}     & \underbar{72.5}         & \underbar{70.0}     & \underbar{58.8}          & \underbar{73.8} & \underbar{75.7}\\
\hline
Ours (SD3) & \textbf{96.9} & \textbf{89.4} & \textbf{87.5} & \textbf{85.6} & \textbf{80.0} & \textbf{82.5} & \textbf{65.6} & \textbf{85.0} & \textbf{84.1}

\\
\hline
\end{tabular}
}
\caption{Text-to-image alignment performances in the RareBench with other baselines with GPT-4o based evaluation. Best values are denoted with \textbf{bold}, second-best with \underline{underlined}.}
\label{tab:rarebench_summary}
\end{table*}

\subsection{Closed-Form Adaptive Coefficient}
\label{sec:closed-form adaptive coefficient}

We extend CFG by replacing its single conditional score with a dynamic interpolation between the target and auxiliary anchor scores, leading to the blended score formulation in Eq.~\eqref{eq:blended_score}. 
We then determine the optimal blending weight $\gamma_t^*$ by minimizing our score-space alignment loss $\mathcal{L}(\gamma_t)$ from Eq.~\eqref{eq:consistency-loss}. 
Solving $\nabla_{\gamma_t}\mathcal{L}(\gamma_t)=0$ yields the closed-form solution:
\begin{equation}
\begin{aligned}
&\gamma_t^*(x_t) 
= \\& \frac{1-w}{w} \cdot 
\frac{\langle s_\theta(x_t,\tilde{c}_T)-s_\theta(x_t),\, s_\theta(x_t,\tilde{c}_A)-s_\theta(x_t,\tilde{c}_T)\rangle}
{\|s_\theta(x_t,\tilde{c}_A)-s_\theta(x_t,\tilde{c}_T)\|_2^2}.
\label{eq:final}
\end{aligned}
\end{equation}
This adaptive coefficient dynamically modulates the contribution of the auxiliary anchor at each timestep, ensuring the guidance direction remains maximally aligned with the target score while still benefiting from anchor-driven stabilization. 
Derivation is provided in Appendix~\ref{sec:derivation} and the full algorithm is provided in Appendix~\ref{sec:full_alg}.

Theoretically, this step-wise projection can be viewed as an optimal adaptation process under log-concave target distributions. The following proposition formalizes that adaptive projection yields a tighter squared 2-Wasserstein $W_2^2$~\cite{wasserstein} bound than any fixed interpolation.

\noindent \textbf{Proposition 1.}
(Extension to the log-concave case.) \textit{Consider two distributions: $q_{\gamma_t}$, which uses a fixed coefficient $\gamma_t$ at timestep $t$, and $q_{\mathrm{proj}}$, which adaptively projects $\gamma_t^*$ to minimize local score error.
Suppose $p_T$ is $k$-strongly log-concave and satisfies the transport-information inequality~\cite{OTTO2000361, bruno2023diffusion}}:
\begin{equation}
W_2^2(q, p_T) \leq \frac{1}{k^2} J(q \| p_T),
\end{equation}
\textit{where $k > 0$ is the strong log-concavity constant, and $J(q\|p) := \mathbb{E}_{q}\!\left[\|\nabla \log q - \nabla \log p\|_2^2\right]$ denotes the Fisher divergence~\cite{OTTO2000361}. Then}
\begin{equation}
\begin{aligned}
\forall \gamma_t \in \mathbb{R}, \quad W_2^2\!\left(q_{\mathrm{proj}},p_T\right) 
&\;\le\; \tfrac{1}{k^2}\,J(q_{\mathrm{proj}}\|p_T)\\
&\;\le\; \tfrac{1}{k^2}\,J(q_{\gamma_t}\|p_T).
\end{aligned}
\end{equation}
\textit{Hence, in the idealized log-concave case, pointwise projection leads to a smaller Fisher divergence $J$ and correspondingly tighter squared 2-Wasserstein bound in compared to any fixed interpolation strategy.}

\noindent \textit{Proof.} 
The complete proof is available in Appendix~\ref{sec:theoretical_extension}.

\section{Experiments}

\paragraph{Implementation Details.}
For the rare image generation, we build upon SD3.0 and use $T=50$ sampling steps with a fixed random seed of 42. 
The classifier-free guidance scale $w=7.0$ follows the default configuration of the baseline model. 
Evaluation is conducted on the RareBench~\cite{park2024rare} benchmark, which assesses rare-concept generation quality across eight categories: five single-attribute cases (property, shape, texture, action, complex) and three multi-attribute cases (concatenation, relation, complex).

For image editing, we build upon the FlowEdit framework~\cite{kulikov2024flowedit}, which employs an inversion-free ODE formulation for structure-preserving, text-guided editing.
Experiments are conducted on the FlowEdit benchmark dataset, which comprises over 70 real images from DIV2K~\cite{Ignatov_2018_ECCV_Workshops} and royalty-free online sources, paired with approximately 250 manually curated source–target text prompts.
We set the classifier-free guidance scale to $w=20.5$ for all image editing experiments, while keeping all other hyperparameters identical to those used in the original FlowEdit configuration.
All implementations are based on PyTorch 2.8.0 and executed on a single NVIDIA RTX 6000 Blackwell GPU.

\noindent \textbf{Baselines.}
We compare our method against ten representative baselines across both rare concept generation and image editing tasks. For rare concept generation, we evaluate three groups of approaches: (1) pre-trained T2I diffusion models, including SD1.5~\cite{rombach2022high}, SDXL~\cite{podell2023sdxl}, PixArt-$\alpha$~\cite{chen2023pixart}, SD3.0~\cite{esser2024scaling}, and FLUX~\cite{blackforestlabs2024flux}; (2) a linguistic binding method, SynGen~\cite{rassin2023linguistic}; and (3) LLM-grounded diffusion models, LMD~\cite{lian2023llm}, RPG~\cite{yang2024mastering}, ELLA~\cite{hu2024ella}, and R2F~\cite{park2024rare}.

For image editing experiments, we compare our method with four representative diffusion- and flow-based approaches: SDEdit~\cite{meng2021sdedit}, ODE Inversion~\cite{rout2024semantic}, iRFDS~\cite{yang2024text}, and FlowEdit~\cite{kulikov2024flowedit}.
All baselines are reproduced following the FlowEdit experiments configuration, ensuring consistent classifier-free guidance and noise scheduling.

\begin{figure*}[t!]
    \centering
    \includegraphics[width=0.9\linewidth]{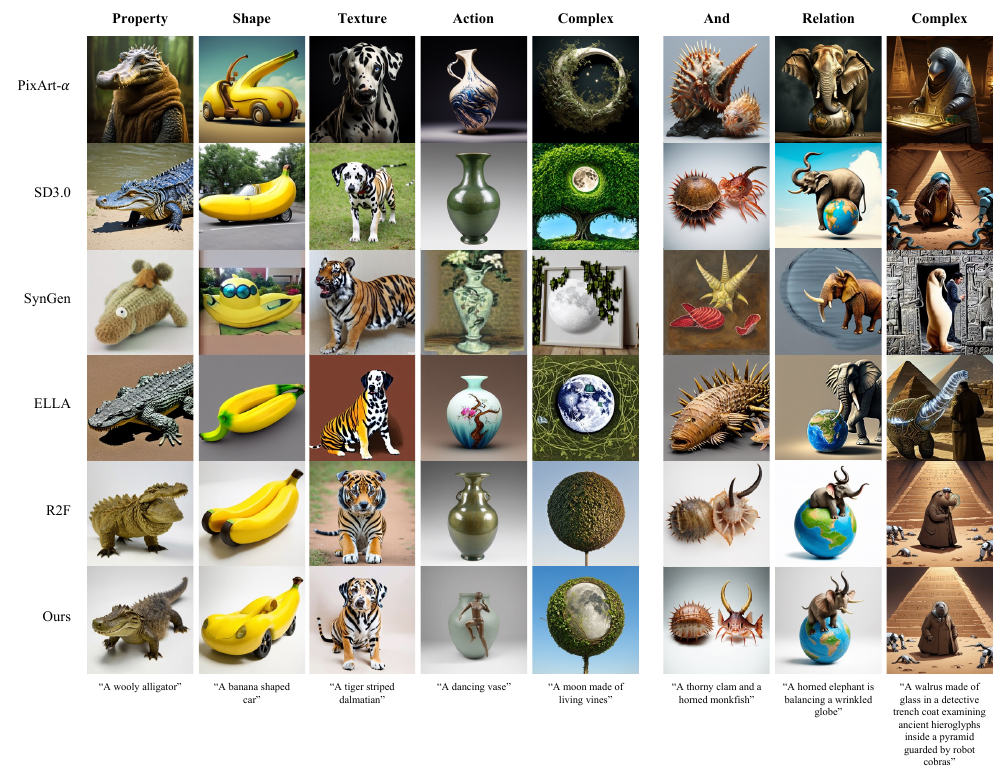}
    \caption{Qualitative comparison with state-of-the-art diffusion models on RareBench. All models are executed with the same random seed. Our method achieves stronger text-to-image alignment without additional training.}
    \label{fig:qualitative_results}
\end{figure*}

\begin{figure*}
    \centering
    \includegraphics[width=0.75\linewidth]{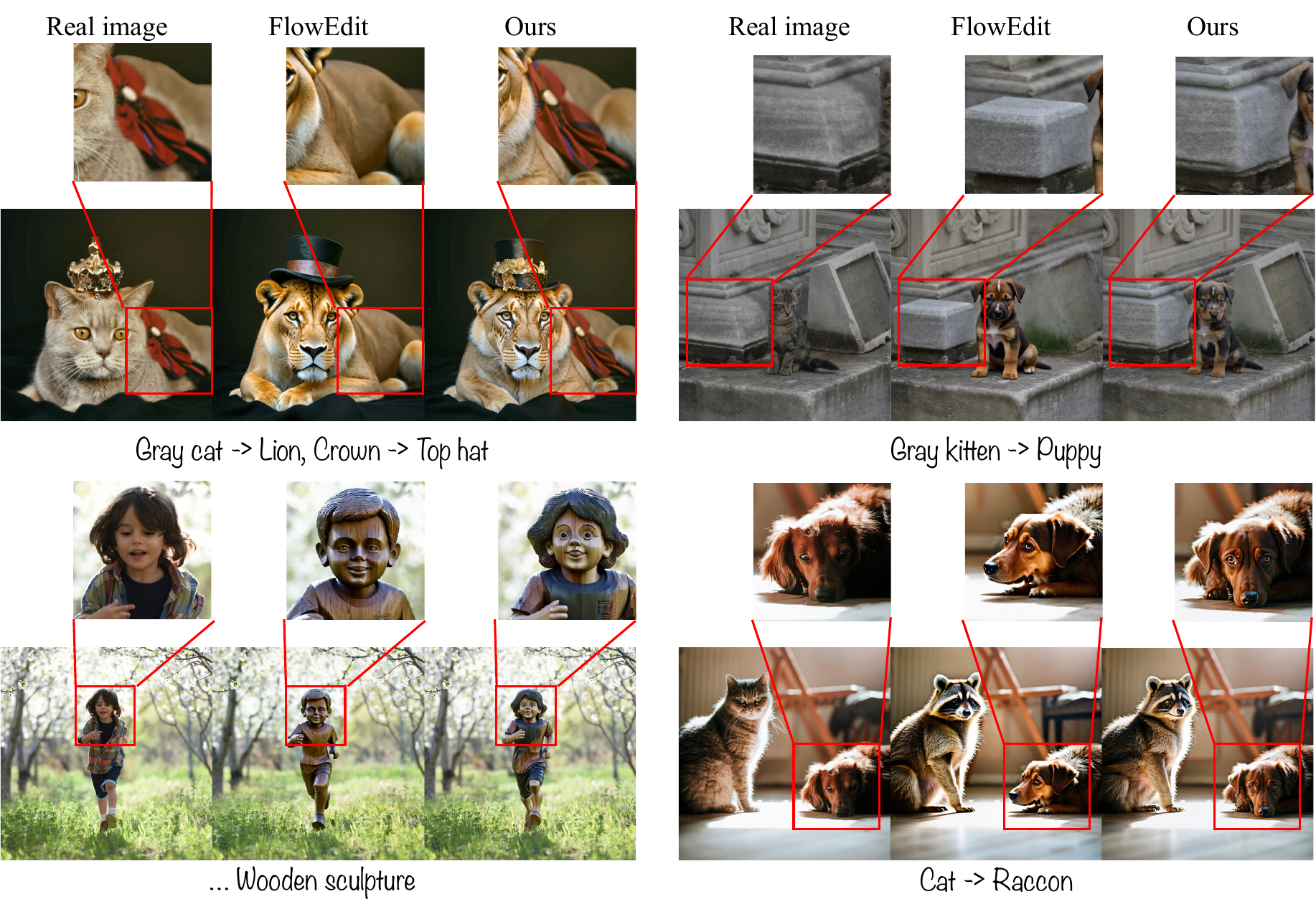}
    \caption{Qualitative comparison of image editing results using FlowEdit~\cite{kulikov2024flowedit} and our method. All edits are performed with the same random seed. Compared to FlowEdit, our approach better preserves source content while faithfully applying the instructed edits.}
    \label{fig:flowedit_qualitative}
\end{figure*}

\noindent \textbf{Experimental Setup.}
We evaluate our approach on two representative tasks: rare concept generation and image editing.
Given a user prompt, we employ auxiliary anchors differently depending on the evaluation setting.
For rare concept generation, each prompt is decomposed into a set of concept pairs $\{c^i\}_{i=1}^m$, where each $c^i=(c_T^i,c_A^i)$ comprises a target (rare) concept $c_T^i$ and its semantically aligned anchor (frequent) counterpart $c_A^i$ (predicted by an LLM), with $m$ denoting the number of target (rare) concepts.
Two complete prompts are then reconstructed:
\begin{align}
\tilde{c}_T &= \text{Reconstruct}(\{c_T^i\}_{i=1}^m), \\
\tilde{c}_A &= \text{Reconstruct}(\{c_A^i\}_{i=1}^m),
\end{align}
where $\text{Reconstruct}(\cdot)$ reinserts the concept set into the original sentence structure. 
In this case, the auxiliary anchor corresponds to the frequent prompt $\tilde{c}_A$.  
For image editing, we instead define the auxiliary anchor as the source (unedited) prompt, allowing the edited target prompt to be guided toward semantic fidelity without compromising structural consistency with the original image.
LLM instructions for rare concept generation and their effects are provided in the Appendix~\ref{sec:llm_instruction}.

\subsection{Main Results of AAPB}

\noindent \textbf{Toy Example.}
To illustrate the effect of adaptive interpolation under controlled conditions, we train a conditional diffusion model on a mixture of two 2D Gaussian distributions: an \emph{auxiliary anchor} $\mathcal{N}((0,-6), I)$ and a \emph{target-prior} $\mathcal{N}((0,3), 1.5I)$ identical to the true target distribution.
This mixture, visualized in Fig.~\ref{fig:toyexample}(a), serves as the training data, where the anchor samples constitute $80\%$ and the target-prior samples $20\%$ of the dataset.
We further include an unconditional branch for $10\%$ of the data following the standard classifier-free guidance scheme~\cite{ho2022classifier}.
After training on these distributions, we generate samples using two interpolation strategies: Fig.~\ref{fig:toyexample}(b) fixed $\gamma_t$ interpolation $p_{\text{lerp}}(x\mid \tilde{c}_A, \tilde{c}_T; \gamma_t=0.8)$ and Fig.~\ref{fig:toyexample}(c) adaptive interpolation $p(x\mid \tilde{c}_A,\tilde{c}_T;\gamma_t^*)$, where $\gamma_t^*$ is dynamically optimized at each denoising step.
The resulting samples are then compared to the true target data distribution $\mathcal{N}((0,3), 1.5I)$, with Fig.~\ref{fig:toyexample}(d) reporting the 2-Wasserstein distance as a function of $\gamma_t$.
While fixed interpolation achieves its minimum around $\gamma_t \approx 0.8$, the adaptive method consistently attains a lower distance, confirming the benefit of step-wise optimization.
This toy example validates that adaptive blending dynamically balances anchor stability and target fidelity. This property is essential for extending the approach to real image generation and editing tasks.

\noindent \textbf{RareBench.} Tab.~\ref{tab:rarebench_summary} compares our method with pre-trained diffusion models (SD1.5, SDXL, PixArt, SD3.0, FLUX), a linguistic binding model (SynGen), and LLM-grounded models (LMD, RPG, ELLA, R2F) on RareBench.
We evaluate text-to-image alignment with GPT-4o, as introduced by~\cite{park2024rare}.
Our approach achieves the highest T2I alignment across all categories, covering both single-object (property, shape, texture, action, complex) and multi-object (concatenation, relation, complex) prompts.
As illustrated in Fig.~\ref{fig:qualitative_results}, our model produces visually stable and prompt-aligned images, whereas R2F often exhibits compositional artifacts, and both SynGen and ELLA struggle with rare attribute binding.
Overall, we obtain an average score of 84.1, outperforming R2F by 8.4 points and substantially surpassing SD3.0 and PixArt, with large gains in property (+7.5), shape (+10.0), and multi-object relation (+6.8). These results highlight the robustness of our rare-to-frequent alignment strategy in boosting semantic faithfulness and structural realism.
Further results with various diffusion baselines, user studies, and image quality evaluations are presented in Appendix~\ref{sec:various_baselines},~\ref{sec:user_study}, and~\ref{sec:Quantitative Image Quality Analysis}, respectively.

\begin{table}[t!]
\centering
\resizebox{1\linewidth}{!}{
\begin{tabular}{c|c|cccc}
\hline
           & CLIP-T $\uparrow$ & CLIP-I $\uparrow$ & LPIPS $\downarrow$ & DINO $\uparrow$  & DreamSim $\downarrow$ \\ \hline
SDEdit 0.2~\cite{meng2021sdedit} & 0.330   & \underbar{0.885}  & 0.251 & 0.634 & \underbar{0.213}    \\
SDEdit 0.4 & 0.340   & 0.854  & 0.316 & 0.564 & 0.273    \\
ODE Inv~\cite{rout2024semantic}    & 0.337  & 0.813  & 0.318 & 0.549 & 0.326    \\
iRFDS~\cite{yang2024text}      & 0.335  & 0.822  & 0.376 & 0.534 & 0.327    \\
FlowEdit~\cite{kulikov2024flowedit}   & \textbf{0.344}  & 0.872  & \underbar{0.181} & \underbar{0.719} & 0.259    \\ \hline
Ours       & \underbar{0.341}  & \textbf{0.905}  & \textbf{0.155} & \textbf{0.814} & \textbf{0.155}    \\ \hline
\end{tabular}
}
\caption{Quantitative results on FlowEdit dataset. CLIP-T$\uparrow$ evaluates text adherence, while CLIP-I$\uparrow$, LPIPS$\downarrow$, DINO$\uparrow$, and DreamSim$\downarrow$ assess structure preservation. Our method achieves superior performance on structure preservation while maintaining strong text alignment. }
\label{table:image_editing}
\end{table}

\noindent \textbf{FlowEdit.} Fig.~\ref{fig:flowedit_qualitative} and Tab.~\ref{table:image_editing} present qualitative and quantitative comparisons on the FlowEdit dataset. 
Our method achieves comparable text alignment to FlowEdit while substantially improving structure preservation metrics (CLIP-I~\cite{radford2021learning}, LPIPS~\cite{zhang2018unreasonable}, DINO~\cite{caron2021emerging}, and DreamSim~\cite{fu2023dreamsim}). 
Qualitatively, our model better preserves the spatial and textural integrity of the original image while faithfully executing the instructed edits, avoiding over-modification or semantic drift frequently observed in FlowEdit. 
Quantitatively, we achieve the highest CLIP-I (0.905) and DINO (0.814) scores, alongside the lowest LPIPS (0.155) and DreamSim (0.155), demonstrating robust content retention and edit precision. 
These results highlight that our approach maintains fine-grained realism and faithful edit alignment. 
Additional qualitative results are provided in Appendix~\ref{sec:more_visualization_results}.

\subsection{Ablation Studies}

\noindent \textbf{Adaptive vs. Fixed Interpolation Scale.} 
To validate the necessity of our adaptive coefficient, we evaluate fixed $\gamma_t$ values ranging from 0.0 to 1.0 in increments of 0.1 for both rare concept generation and image editing tasks.
As shown in Fig.~\ref{fig:gamma_t_experiments}, performance follows a distinct convex trend: intermediate coefficients (around $\gamma_t\approx0.3$–$0.5$) achieve peak accuracy, while both extremes sharply degrade generation quality.
This behavior mirrors our toy example (Fig.~\ref{fig:toyexample}), where the same convex pattern emerges in the idealized 2D Gaussian setting.
This trend appears consistently across both toy and real image settings, indicating that the anchor–target trade-off generalizes beyond idealized conditions. Consequently, no single fixed coefficient can maintain optimal alignment throughout the diffusion process.
In contrast, our AAPB method adaptively adjusts $\gamma_t$ throughout denoising, consistently outperforming all fixed-coefficient baselines and the R2F model without any manual tuning. 

For image editing, we further examine the trade-off between structure preservation (CLIP-I) and text alignment (CLIP-T) across various fixed $\gamma_t$ values, as illustrated in Fig.~\ref{fig:clip-tradeoff}.
When $\gamma_t$ is 0.0, edits overly distort the source image—improving text consistency but severely compromising structural fidelity.
Conversely, large $\gamma_t$ values retain structure but weaken text alignment, indicating that fixed interpolation cannot balance both fidelity dimensions simultaneously.
Our adaptive strategy occupies a Pareto-favorable region in this trade-off space, preserving competitive image structure while achieving stronger text alignment than all fixed configurations and existing editing baselines.
Lastly, we provide a comprehensive analysis of varying classifier-free guidance scales $w$ and the behavior of the adaptive coefficient $\gamma_t^{*}$ in Appendix~\ref{sec:varying_cfg_guidance} and Appendix~\ref{sec:adaptive_coefficient_progress}, respectively.

\begin{figure}[t!]
    \centering
    \includegraphics[width=1\linewidth]{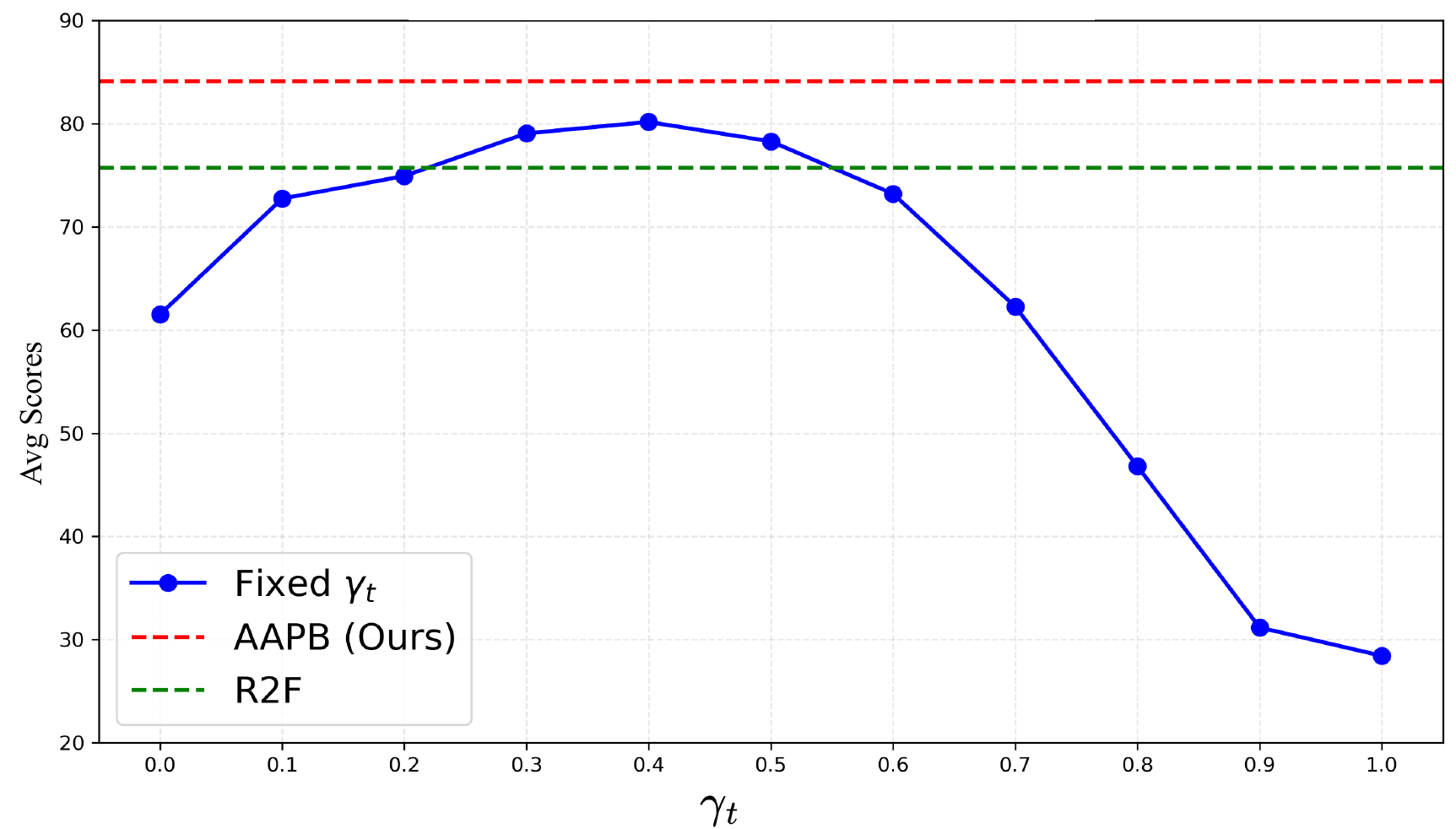}
    \caption{Comparison between fixed $\gamma_t$, R2F~\cite{park2024rare}, and our adaptive coefficient on RareBench. The blue line denotes fixed $\gamma_t$, the green line denotes R2F, and the red line represents our adaptive approach, which consistently outperforms the fixed baseline.}
    \label{fig:gamma_t_experiments}
\end{figure}

\begin{figure}[t!]
    \centering
    \includegraphics[width=1\linewidth]{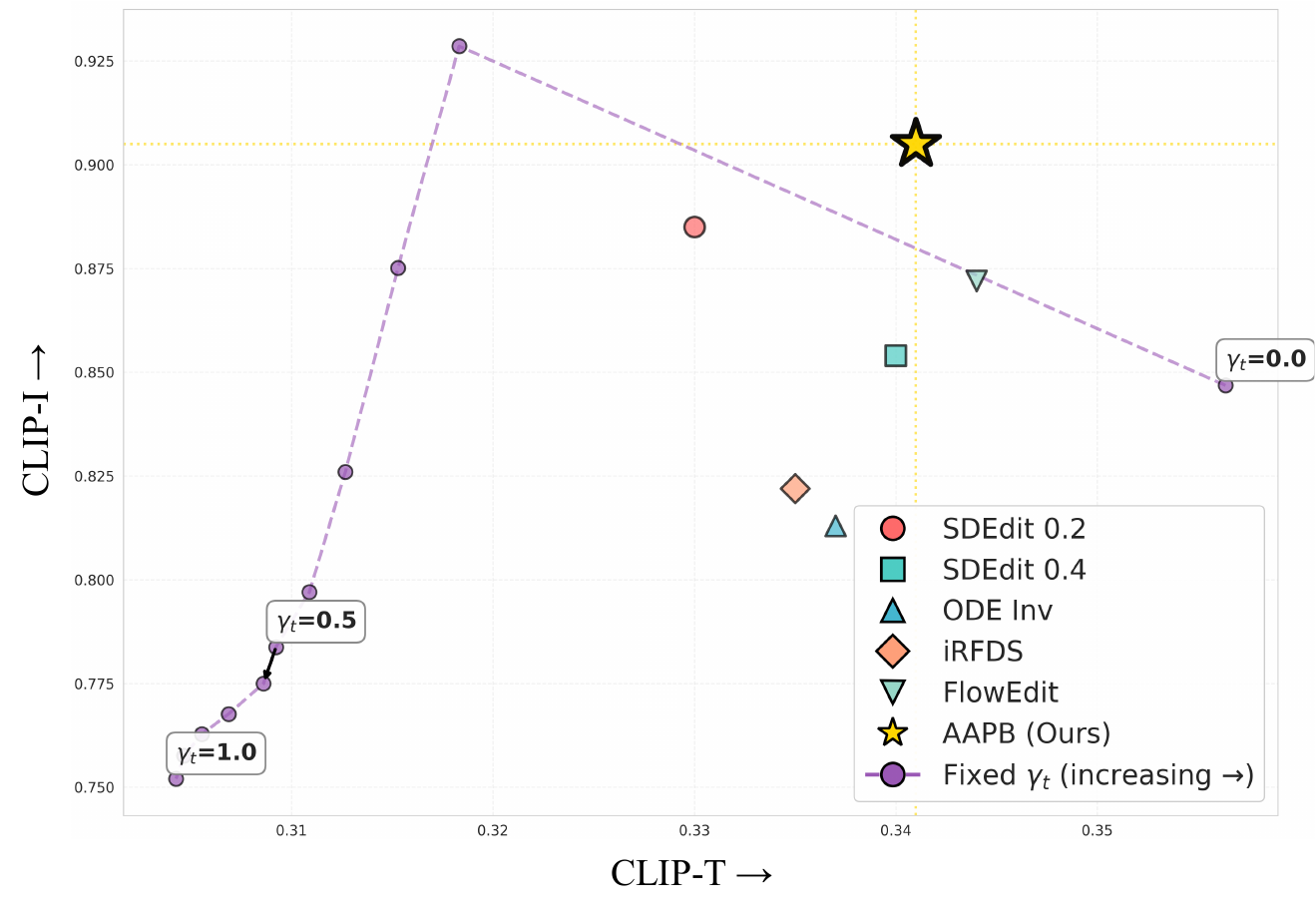}
    \caption{Trade-off between structure preservation (CLIP-I) and text alignment (CLIP-T) across different fixed $\gamma_t$ values in FlowEdit dataset. 
    Our method consistently resides in a Pareto-favorable region, balancing structural preservation and text alignment more effectively than fixed baselines.
    }
    \label{fig:clip-tradeoff}
\end{figure}

\noindent \textbf{Anchor Sensitivity Analysis.}
We assess robustness to anchor prompt quality using four anchor generation strategies on RareBench (Tab.~\ref{tab:anchor}), each representing a different level of semantic guidance: (i) human-annotated anchors from R2F reflecting expert prior knowledge, (ii) arbitrary anchors randomly selected without regard to target concepts, (iii) a minimal ``objects'' replacement that replaces rare concepts with the generic term object (e.g., ``hairy frog'' $\rightarrow$ ``hairy object''),  and (iv–v) LLaMA3 (LLaMA3-8B-Instruct~\cite{grattafiori2024llama}) and GPT-4o generated anchors that utilize LLM-generated content to follow given instructions (Appendix~\ref{sec:llm_instruction}).  
Across all strategies, AAPB consistently outperforms R2F, showing strong robustness even under weak or noisy semantic anchors. 
Notably, GPT-4o-generated anchors achieve the best performance, surpassing both human annotations and LLaMA3. 
This validates two key findings: (1) AAPB remains effective even with minimal or imperfect semantic anchors, as the adaptive coefficient (Eq.~\eqref{eq:final}) automatically balances anchor influence based on alignment strength; and (2) state-of-the-art LLMs can automate anchor selection, surpassing manual annotation while enabling practical and scalable deployment without expert intervention.
Additional analysis is provided in Appendix~\ref{sec:anchor-quality}.

\begin{table}[t!]
\centering
\resizebox{1\linewidth}{!}{
\begin{tabular}{c|ccccc|c}
\hline
Models          & Property & Shape & Texture & Action & Complex & Avg \\ \hline
SD3.0           & 49.4     & 76.3  & 53.1    & 71.9   & 65.0  & 63.1  \\ \hline
Human Generated (R2F) & 79.8    & 68.8  & 76.3    & 78.5   & -     & 75.9   \\
Human Generated (Ours) &\textbf{ 93.1 }    & \textbf{74.4}  & \textbf{83.7 }   & \textbf{79.2}   & -     & \textbf{82.6 }  \\ \hdashline

Arbitrary (R2F)          & 57.5     & 80.0  & \textbf{63.1}    & 70.0   & 69.4  & 68.0 \\
Arbitrary (Ours)         & \textbf{65.6}     & \textbf{83.7}  & 62.5    & \textbf{73.1}   & \textbf{70.0}  & \textbf{71.0} \\ \hdashline

``objects'' (R2F)     & \textbf{90.0}          & \textbf{78.1}       & 80.6         & 78.8         &  80.6       & 81.6      \\
``objects'' (Ours)      & 89.4     & 74.4     & \textbf{87.5}    & \textbf{80.6 }  & \textbf{83.7}  & \textbf{83.1}  \\ \hdashline
LLaMA3 (R2F)          & 81.9     & 77.1 & 76.3    & 78.8   & 67.7  & 76.4 \\
LLaMA3 (Ours)         & \textbf{82.5}     & \textbf{77.5}  & \textbf{80.6}    & \textbf{85.6}   & \textbf{78.8}  & \textbf{81.0} \\ \hdashline
GPT-4o (R2F)             & 89.4     & 79.4  & 81.9    & 80.0   & 72.5  & 80.6  \\ 
GPT-4o (Ours)        & \textbf{96.9}     & \textbf{89.4}  & \textbf{87.5}    &\textbf{ 85.6}   & \textbf{80.0}  & \textbf{87.9}  \\ \hline
\end{tabular}
}
\caption{Experiments with different anchor prompt strategies. Our method consistently outperforms R2F~\cite{park2024rare} regardless of the anchor prompts.}
\label{tab:anchor}
\end{table}

\section{Conclusion}

We proposed Adaptive Auxiliary Prompt Blending (AAPB), a unified and training-free framework that stabilizes the diffusion process in low-density regions.
By adaptively balancing the influence between target and anchor prompts, AAPB achieves both semantic faithfulness and structural consistency.
Grounded in Tweedie's identity, we derived a closed-form adaptive coefficient that provides a principled mechanism for prompt blending.
Experiments on RareBench and FlowEdit confirm that AAPB consistently improves rare concept generation and zero-shot editing.
These findings highlight adaptive score-space modulation as a promising direction for future controllable diffusion generation and editing systems.

\section*{Acknowledgments}
This was partly supported by the Institute of Information \& Communications Technology Planning \& Evaluation (IITP) grant funded by the \grantsponsor{Korean government (MSIT)} (No. \grantnumber{1}{RS-2020-II201373}, Artificial Intelligence Graduate School Program(Hanyang University)) and the Institute of Information \& Communications Technology Planning \& Evaluation (IITP) grant funded by the \grantsponsor{Korea government (MSIT)} (No. \grantnumber{2}{RS-2025-02219062}, Self-training framework for VLM-based defect detection and explanation model in manufacturing process).

{
    \small
    \bibliographystyle{ieeenat_fullname}
    \bibliography{main}
}

\clearpage
\setcounter{page}{1}
\maketitlesupplementary

\section{Theoretical Extension: Log-Concave Setting.}
\label{sec:theoretical_extension}
To provide theoretical insight into why our adaptive coefficient outperforms fixed interpolation, we analyze the idealized case where the target distribution satisfies log-concavity and a transport-information inequality~\cite{OTTO2000361, Robbins1992}.
Under these conditions, we formally establish that pointwise adaptive projection yields a provably tighter upper bound on the squared Wasserstein-2 distance compared to any fixed coefficient strategy.

\begin{proposition}[Extension to the log-concave case]
Consider two distributions: $q_{\gamma_t}$, which uses a fixed coefficient $\gamma_t$ at timestep $t$, and $q_{\mathrm{proj}}$, which adaptively projects $\gamma_t^*$ to minimize local score error.
Suppose $p_T$ is $k$-strongly log-concave and satisfies the transport-information inequality~\cite{OTTO2000361, bruno2023diffusion}.
\begin{equation}
W_2^2(q, p_T) \leq \frac{1}{k^2} J(q \| p_T),
\end{equation}
\textit{where $k > 0$ is the strong log-concavity constant, and $J(q\|p) := \mathbb{E}_{q}\!\left[\|\nabla \log q - \nabla \log p\|_2^2\right]$ denotes the Fisher divergence~\cite{OTTO2000361}. Then}
\begin{equation}
\begin{aligned}
\forall \gamma_t \in \mathbb{R}, \quad W_2^2\!\left(q_{\mathrm{proj}},p_T\right) 
&\;\le\; \tfrac{1}{k^2}\,J(q_{\mathrm{proj}}\|p_T)\\
&\;\le\; \tfrac{1}{k^2}\,J(q_{\gamma_t}\|p_T).
\end{aligned}
\end{equation}
Hence, in the idealized log-concave case, pointwise projection leads to a smaller Fisher divergence $J$ and correspondingly tighter squared 2-Wasserstein bound compared to any fixed interpolation strategy.

\end{proposition}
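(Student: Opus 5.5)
The plan is to split the statement into its two inequalities and handle each separately.

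The first inequality, $W_2^2(q_{\mathrm{proj}},p_T)\le \tfrac{1}{k^2}J(q_{\mathrm{proj}}\|p_T)$, is the assumed transport-information inequality applied with $q=q_{\mathrm{proj}}$. I will only check that $q_{\mathrm{proj}}$ lies in the admissible class: it should have a differentiable density and finite Fisher divergence. I will take this as part of the idealized setting.

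The second inequality carries the real content. The key modelling identification, in line with the idealized setting, is that the score of the sampled distribution at time $t$ is the guidance score actually used. That is, $\nabla\log q_{\gamma_t}=\tilde s_\theta(\cdot;w,\gamma_t)$, and $\nabla\log q_{\mathrm{proj}}(x)=\tilde s_\theta(x;w,\gamma_t^*(x))$. The target score is $\nabla\log p_T=s_\theta(\cdot,\tilde c_T)$, the score of the target-conditioned model.

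With this identification, $J(q\|p_T)=\mathbb E_q[\mathcal L(\gamma)]$, where $\mathcal L$ is the score-space loss in Eq.~\eqref{eq:consistency-loss}. By Eq.~\eqref{eq:blended_score}, the residual $\tilde s_\theta-s_\theta(x,\tilde c_T)$ is affine in $\gamma_t$:
\[
\tilde s_\theta-s_\theta(x,\tilde c_T)=(w-1)\bigl(s_\theta(x,\tilde c_T)-s_\theta(x)\bigr)+w\gamma_t\bigl(s_\theta(x,\tilde c_A)-s_\theta(x,\tilde c_T)\bigr).
\]
So $\mathcal L$ is a convex quadratic in $\gamma_t$. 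Its minimizer is exactly Eq.~\eqref{eq:final}, an orthogonal projection. Hence pointwise, for every $x$ and every fixed $\gamma_t\in\mathbb R$,
\[
\|\tilde s_\theta(x;w,\gamma_t^*(x))-s_\theta(x,\tilde c_T)\|_2^2\le\|\tilde s_\theta(x;w,\gamma_t)-s_\theta(x,\tilde c_T)\|_2^2 .
\]
If $s_\theta(x,\tilde c_A)=s_\theta(x,\tilde c_T)$, the loss does not depend on $\gamma_t$, and any choice attains equality.

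The main obstacle is integrating this pointwise inequality. The left side is integrated against $q_{\mathrm{proj}}$ and the right side against $q_{\gamma_t}$, which are different measures. I would handle this by stating the comparison under a common reference measure at time $t$. Both samplers share the same marginal $x_t$ up to step $t$ and differ only in the current-step score, so the expectation is taken over the same $x_t$ distribution. Integrating the pointwise inequality then gives $J(q_{\mathrm{proj}}\|p_T)\le J(q_{\gamma_t}\|p_T)$.

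If the measures must genuinely differ, I would instead claim the weaker, honest statement. In that version the Fisher-type quantity is the local score error $\mathbb E_{x_t}[\mathcal L(\gamma)]$ under the shared current-state distribution, and the fixed-$\gamma$ bound is evaluated there as well. Multiplying by $1/k^2>0$ and chaining with the first inequality concludes the argument. I will explicitly flag this shared-measure step as where the idealization enters.
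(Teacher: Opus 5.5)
Your proposal matches the paper's proof essentially step for step: $\gamma_t^*$ is the pointwise minimizer of the convex quadratic $\mathcal{L}$, the pointwise inequality is integrated under a common time-$t$ law $q_t$, and the result is chained with the assumed transport-information inequality. The paper makes the same score identification you make explicit and calls the integrated quantity an ``instantaneous'' Fisher divergence, so the shared-measure step you flag is exactly the idealization it makes silently; note only that your fallback ``local score error'' version would not chain with the transport-information inequality on its own, since that inequality is stated for the genuine $J(q_{\mathrm{proj}}\|p_T)$.
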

\noindent \textit{Remark.} While this result relies on the assumption of global log-concavity—which natural image distributions typically do not satisfy—it provides a theoretical motivation for minimizing the local score error pointwise.

\begin{proof}
The Fisher divergence between a distribution $q$ (at time $t$) and the target $p_T$ is defined as the expected squared error between their score functions:
\begin{equation}
J(q \| p_T)=\mathbb{E}_{x \sim q}\left[\left\|\nabla_x \log q(x)-\nabla_x \log p_T(x)\right\|_2^2\right].
\end{equation}
In our framework, we aim to approximate the target score $\nabla \log p_T$ using the blended model $\tilde{s}_\theta$. For clarity, let $s_T(x_t) := \nabla_{x_t} \log p_T(x_t)$ denote the true target score (approximated by the learned rare score in practice).

Consider the comparison at a specific timestep $t$.
The score for our projection model is $s_{\text{proj}}(x_t) = \tilde{s}_\theta(x_t; w, \gamma_t^*(x_t))$.
The score for the baseline model with a scalar coefficient is given by $s_{\gamma_t}(x_t) = \tilde{s}_\theta(x_t; w, \gamma_t)$, where $\gamma_t \in \mathbb{R}$ denotes the coefficient applied at timestep $t$ and is fixed across all timesteps.

By definition, our adaptive coefficient $\gamma_t^*(x_t)$ is the solution that minimizes the squared error to the target score $s_T(x_t)$ at each spatial point $x_t$:
\begin{equation}
\gamma_t^*(x_t) = \underset{{\gamma_t} \in \mathbb{R}}{\arg \min} \left\| \tilde{s}_\theta(x_t ; w, \gamma_t) - s_T(x_t) \right\|_2^2.
\end{equation}
Since $\gamma_t^*(x_t)$ is the minimizer over $\mathbb{R}$ for every individual $x_t$, the error is guaranteed to be lower than or equal to the error yielded by any scalar $\gamma_t$:
\begin{equation}
\forall \gamma_t \in \mathbb{R}, \quad \left\|s_{\text{proj}}(x_t) - s_T(x_t)\right\|_2^2 \leq \left\|s_{\gamma_t}(x_t) - s_T(x_t)\right\|_2^2.
\end{equation}
Since this pointwise inequality holds for all $x_t$ in the support of $q_t$, taking the expectation over $x_t \sim q_t$ preserves the inequality:
\begin{equation}
\begin{aligned}
\mathbb{E}_{x_t \sim q_t}&\left[\left\|s_{\text{proj}}(x_t) - s_T(x_t)\right\|_2^2\right] \\
&\leq \mathbb{E}_{x_t \sim q_t}\left[\left\|s_{\gamma_t}(x_t) - s_T(x_t)\right\|_2^2\right].
\end{aligned}
\end{equation}
This implies that our method achieves a minimized instantaneous Fisher divergence compared to any scalar choice $\gamma_t$:
\begin{equation}
J(q_{\mathrm{proj}} \| p_T) \leq J(q_{\gamma_t} \| p_T).
\end{equation}
Via the transport-information inequality ($W_2^2 \leq \frac{1}{k^2}J$), this lower Fisher divergence implies a tighter upper bound on the squared Wasserstein-2 distance to the target distribution at each timestep.
\end{proof}

\section{Derivation of Closed-Form Adaptive Coefficient}
\label{sec:derivation}

We provide a detailed derivation of the closed-form solution for our adaptive coefficient $\gamma_t^*$ presented in Eq.~\eqref{eq:final} of the main paper.

\begin{proposition}[Optimal Adaptive Coefficient]
\label{prop:optimal_gamma}
For a given $(x_t, t)$ with $w > 0$, assume that $\|s_\theta(x_t,\tilde{c}_A) - s_\theta(x_t,\tilde{c}_T)\|_2^2 > 0$. 
Then the optimal adaptive coefficient $\gamma_t^*$ that minimizes the score-space alignment loss $\mathcal{L}(\gamma_t) = \|\tilde{s}_\theta(x_t;w,\gamma_t) - s_\theta(x_t,\tilde{c}_T)\|_2^2$ has the closed-form solution:
\begin{equation}
\begin{aligned}
& \gamma_t^*(x_t) = \\ & \frac{1-w}{w} \cdot \frac{\langle s_\theta(x_t,\tilde{c}_T) - s_\theta(x_t), 
s_\theta(x_t,\tilde{c}_A) - s_\theta(x_t,\tilde{c}_T) \rangle}
{\|s_\theta(x_t,\tilde{c}_A) - s_\theta(x_t,\tilde{c}_T)\|_2^2}.
\end{aligned}
\end{equation}
\end{proposition}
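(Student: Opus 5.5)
The loss is a convex quadratic in the scalar $\gamma_t$, so the plan is to write the residual as an affine function of $\gamma_t$, set the derivative to zero, and solve. Introduce the shorthand $s_u := s_\theta(x_t)$, $s_T := s_\theta(x_t,\tilde{c}_T)$, $s_A := s_\theta(x_t,\tilde{c}_A)$, $a := s_T - s_u$ and $d := s_A - s_T$.

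\textbf{Step 1: rewrite the residual.} Expand the blended score from Eq.~\eqref{eq:blended_score}:
\[
\tilde{s}_\theta = s_u + w\big(s_T + \gamma_t d - s_u\big) = s_u + w a + w\gamma_t d .
\]
Subtracting $s_T = s_u + a$ gives the residual
\[
r(\gamma_t) := \tilde{s}_\theta - s_T = (w-1)\,a + w\gamma_t\, d .
\]
This is affine in $\gamma_t$.

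\textbf{Step 2: expand the loss.} We get
\[
\mathcal{L}(\gamma_t) = w^2\gamma_t^2\|d\|_2^2 + 2w(w-1)\gamma_t\langle a,d\rangle + (w-1)^2\|a\|_2^2 .
\]
Because $w>0$ and $\|d\|_2^2>0$ by hypothesis, the leading coefficient $w^2\|d\|_2^2$ is strictly positive. Hence $\mathcal{L}$ is a strictly convex quadratic in $\gamma_t$, and its unique global minimizer over $\mathbb{R}$ is its stationary point. This is exactly where both assumptions of the proposition enter.

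\textbf{Step 3: solve the first-order condition.} Setting $\mathcal{L}'(\gamma_t) = 2w^2\gamma_t\|d\|_2^2 + 2w(w-1)\langle a,d\rangle = 0$ and dividing by $2w^2\|d\|_2^2$ gives
\[
\gamma_t^* = \frac{1-w}{w}\,\frac{\langle a,d\rangle}{\|d\|_2^2} .
\]
Substituting back $a$ and $d$ recovers the stated formula. An equivalent view is that $\gamma_t^*$ is the least-squares coefficient projecting $(1-w)a/w$ onto the direction $d$.

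There is no real obstacle. The only points needing care are the sign bookkeeping in $r(\gamma_t)$, which produces the factor $(1-w)$ rather than $(w-1)$, and stating explicitly that strict convexity makes the stationary point the minimizer.
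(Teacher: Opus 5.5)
Your proposal is correct and follows essentially the same route as the paper. The paper likewise writes the loss as $\|\mathbf{r}+w\gamma_t\mathbf{d}\|_2^2$ with $\mathbf{r}=(1-w)(s_\theta(x_t)-s_\theta(x_t,\tilde{c}_T))$, which equals your $(w-1)a$. It then expands the quadratic, solves the first-order condition, and confirms a minimum via the positive second derivative $2w^2\|\mathbf{d}\|_2^2$.
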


\begin{proof}
Recall our blended score function from Eq.~\eqref{eq:blended_score}:
\begin{equation}
\begin{aligned}
\tilde{s}_\theta(x_t; &w,\gamma_t) = s_\theta(x_t) \\
&+ w\left((1-\gamma_t)s_\theta(x_t,\tilde{c}_T) + \gamma_t s_\theta(x_t,\tilde{c}_A) - s_\theta(x_t)\right),
\end{aligned}
\end{equation}
where $s_\theta(x_t)$ is the unconditional score, $s_\theta(x_t,\tilde{c}_T)$ is the target-conditioned score, and $s_\theta(x_t,\tilde{c}_A)$ is the auxiliary anchor-conditioned score.

Our objective is to minimize the score-space alignment loss from Eq.~\eqref{eq:consistency-loss}:
\begin{equation}
\mathcal{L}(\gamma_t) = \|\tilde{s}_\theta(x_t;w,\gamma_t) - s_\theta(x_t,\tilde{c}_T)\|_2^2.
\end{equation}
First, we simplify the blended score by distributing $w$:
\begin{multline}
\tilde{s}_\theta(x_t;w,\gamma_t) = s_\theta(x_t) + w(1-\gamma_t)s_\theta(x_t,\tilde{c}_T) \\
+ w\gamma_t s_\theta(x_t,\tilde{c}_A) - ws_\theta(x_t) \\
= (1-w)s_\theta(x_t) + w(1-\gamma_t)s_\theta(x_t,\tilde{c}_T) + w\gamma_t s_\theta(x_t,\tilde{c}_A).
\end{multline}
Substituting into the loss:
\begin{align}
\mathcal{L}(\gamma_t) 
&= \|(1-w)s_\theta(x_t) + w(1-\gamma_t)s_\theta(x_t,\tilde{c}_T) \nonumber\\
&\quad + w\gamma_t s_\theta(x_t,\tilde{c}_A) - s_\theta(x_t,\tilde{c}_T)\|_2^2 \nonumber\\
&= \|(1-w)s_\theta(x_t) - \gamma_t w s_\theta(x_t,\tilde{c}_T) + w\gamma_t s_\theta(x_t,\tilde{c}_A) \nonumber\\
&\quad + ws_\theta(x_t,\tilde{c}_T) - s_\theta(x_t,\tilde{c}_T)\|_2^2 \nonumber\\
&= \|(1-w)s_\theta(x_t) + (w-1)s_\theta(x_t,\tilde{c}_T) \nonumber\\
&\quad + w\gamma_t\left(s_\theta(x_t,\tilde{c}_A) - s_\theta(x_t,\tilde{c}_T)\right)\|_2^2.
\end{align}
For notational convenience, define:
\begin{align}
\mathbf{r} &:= (1-w)(s_\theta(x_t) - s_\theta(x_t,\tilde{c}_T)), \\
\mathbf{d} &:= s_\theta(x_t,\tilde{c}_A) - s_\theta(x_t,\tilde{c}_T).
\end{align}
Then the loss becomes:
\begin{equation}
\mathcal{L}(\gamma_t) = \|\mathbf{r} + w\gamma_t \mathbf{d}\|_2^2.
\end{equation}
\begin{align}
\mathcal{L}(\gamma_t) &= \langle \mathbf{r} + w\gamma_t \mathbf{d}, \mathbf{r} + w\gamma_t \mathbf{d} \rangle \nonumber\\
&= \|\mathbf{r}\|_2^2 + 2w\gamma_t \langle \mathbf{r}, \mathbf{d} \rangle + w^2\gamma_t^2 \|\mathbf{d}\|_2^2.
\end{align}
This is a quadratic function in $\gamma_t$.
Taking the derivative with respect to $\gamma_t$:
\begin{equation}
\frac{\partial \mathcal{L}}{\partial \gamma_t} = 2w\langle \mathbf{r}, \mathbf{d} \rangle + 2w^2\gamma_t \|\mathbf{d}\|_2^2.
\end{equation}
Setting the derivative to zero:
\begin{align}
2w\langle \mathbf{r}, \mathbf{d} \rangle + 2w^2\gamma_t^* \|\mathbf{d}\|_2^2 &= 0 \nonumber\\
\gamma_t^* &= -\frac{\langle \mathbf{r}, \mathbf{d} \rangle}{w\|\mathbf{d}\|_2^2}.
\end{align}
Recall that $\mathbf{r} = (1-w)(s_\theta(x_t) - s_\theta(x_t,\tilde{c}_T))$ and $\mathbf{d} = s_\theta(x_t,\tilde{c}_A) - s_\theta(x_t,\tilde{c}_T)$:
\begin{align}
&\gamma_t^*(x_t) \nonumber \\&= \frac{\langle (w-1)(s_\theta(x_t) - s_\theta(x_t,\tilde{c}_T)), s_\theta(x_t,\tilde{c}_A) - s_\theta(x_t,\tilde{c}_T) \rangle}{w\|s_\theta(x_t,\tilde{c}_A) - s_\theta(x_t,\tilde{c}_T)\|_2^2} \nonumber\\
&= \frac{(w-1)\langle s_\theta(x_t) - s_\theta(x_t,\tilde{c}_T), s_\theta(x_t,\tilde{c}_A) - s_\theta(x_t,\tilde{c}_T) \rangle}{w\|s_\theta(x_t,\tilde{c}_A) - s_\theta(x_t,\tilde{c}_T)\|_2^2} \nonumber\\
&= \frac{w-1}{w} \cdot \frac{\langle s_\theta(x_t) - s_\theta(x_t,\tilde{c}_T), s_\theta(x_t,\tilde{c}_A) - s_\theta(x_t,\tilde{c}_T) \rangle}{\|s_\theta(x_t,\tilde{c}_A) - s_\theta(x_t,\tilde{c}_T)\|_2^2} \nonumber\\
 &= \frac{1-w}{w} \cdot \frac{\langle s_\theta(x_t,\tilde{c}_T) - s_\theta(x_t), s_\theta(x_t,\tilde{c}_A) - s_\theta(x_t,\tilde{c}_T) \rangle}{\|s_\theta(x_t,\tilde{c}_A) - s_\theta(x_t,\tilde{c}_T)\|_2^2}.
\end{align}
which matches Eq.~\eqref{eq:final} in the main paper.

To confirm this is a minimum, we check the second derivative:
\begin{equation}
\frac{\partial^2 \mathcal{L}}{\partial \gamma_t^2} = 2w^2 \|\mathbf{d}\|_2^2 = 2w^2 \|s_\theta(x_t,\tilde{c}_A) - s_\theta(x_t,\tilde{c}_T)\|_2^2 > 0,
\end{equation}
confirming that $\gamma_t^*$ is indeed a global minimum of the quadratic loss function.
\end{proof}

\begin{table*}[t]
\centering
\resizebox{0.9\textwidth}{!}{
\begin{tabular}{c|l|ccccc|ccc|c}
\hline
\multicolumn{1}{c|}{} & \multicolumn{1}{c|}{} & \multicolumn{5}{c|}{Single Object} & \multicolumn{3}{c|}{Multi Objects} &  \\ 
& Models & Property & Shape & Texture & Action & Complex & Concat & Relation & Complex & Avg \\ \hline
\multirow{6}{*}{\rotatebox{90}{\centering \footnotesize SDXL}}
& SDXL~\cite{podell2023sdxl} & 60.0 & 56.9 & 71.3 & 47.5 & 58.1 & 39.4 & 35.0 & 47.5 & 52.0 \\
& LMD~\cite{lian2023llm}                                          & 23.8          & 35.6          & 27.5    & 23.8   & 35.6                                                        & 33.1   & 34.4          & 33.1   & 30.9                                                   \\
& RPG~\cite{yang2024mastering}                                          & 33.8          & 54.4          & 66.3    & 31.9   & 37.5                                                        & 21.9   & 15.6          & 29.4   & 36.4                                                   \\
& ELLA~\cite{hu2024ella}                                         & 31.3          & 61.6          & 64.4    & 43.1   & 66.3                                                        & 42.5   & 50.6          & 51.9                & 51.5                                      \\
& R2F (SDXL)~\cite{park2024rare} & 71.3 & 77.5 & 73.8 & 54.4 & 70.6 & 50.6 & 36.0 & 52.8 & 60.9 \\
& Ours (SDXL) & \textbf{85.6} & \textbf{83.1} & \textbf{86.9} & \textbf{70.0} & \textbf{80.0} & \textbf{64.4} & \textbf{55.0} & \textbf{63.8} & \textbf{73.6} \\ \hline

\multirow{3}{*}{\rotatebox{90}{\centering \footnotesize IterComp}} 
& IterComp~\cite{zhang2024itercomp} & 63.8 & 66.9 & 61.3 & 65.6 & 61.9 & 41.3 & 29.4 & 53.1 & 55.4 \\
& R2F (IterComp)~\cite{park2024rare} & 78.1 & \textbf{77.5} & 79.4 & 66.9 & 63.9 & 41.5 & 36.6 & 53.4 & 62.2 \\
& Ours (IterComp) & \textbf{90.6} & \textbf{77.5} & \textbf{88.1} & \textbf{86.3} & \textbf{84.4} & \textbf{75.0} & \textbf{66.3} & \textbf{65.6} & \textbf{79.2} \\ \hline

\multirow{3}{*}{\rotatebox{90}{\centering \footnotesize SD3.0}} 
& SD3.0~\cite{esser2024scaling} & 49.4 & 76.3 & 53.1 & 71.9 & 65.0 & 55.0 & 51.2 & 70.0 & 61.5 \\
& R2F (SD3)~\cite{park2024rare} & 89.4 & 79.4 & 81.9 & 80.0 & 72.5 & 70.0 & 58.8 & 73.8 & 75.7 \\
& Ours (SD3) & \textbf{96.9} & \textbf{89.4} & \textbf{87.5} & \textbf{85.6} & \textbf{80.0} & \textbf{82.5} & \textbf{65.6} & \textbf{85.0} & \textbf{84.1} \\ \hline

\end{tabular}
}
\caption{Text-to-image alignment on RareBench with SDXL, SD3.0, and IterComp, comparing our method integrated into each model. Results demonstrate consistent robustness across diverse pre-trained diffusion backbones.}
\label{tab:sdxl}
\end{table*}

\section{Toy Example.}
\label{sec:toy_example}
We empirically validate our method on a toy problem, generating samples from a 2D Gaussian \emph{target} distribution using interpolation with an \emph{auxiliary anchor} distribution.
This simplified setup highlights the effectiveness of adaptive blending compared to fixed blending strategies.

The target data distribution $p_\text{data}(x\mid \tilde{c}_T)$ is defined as $\mathcal{N}((0, 3), 1.5I)$.
We train a conditional diffusion model on a mixture of two distributions: an \emph{auxiliary anchor} distribution $\mathcal{N}((0, -6), I)$, positioned distant from the target, and a \emph{target-prior} distribution $\mathcal{N}((0, 3), 1.5I)$ that is identical to the target distribution.
This setup represents an idealized scenario in which the target prior provides the same information as the target distribution itself—corresponding to accurate score estimation under sufficient model capacity~\cite{song2020score, um2023don}.
The auxiliary anchor samples constitute $80\%$ of the training data, while the target-prior samples account for the remaining $20\%$.
Additionally, following the standard classifier-free guidance (CFG) training scheme~\cite{ho2022classifier}, we include an unconditional branch for $10\%$ of the training data, where the conditioning input is randomly dropped.

Fig.~\ref{fig:toyexample} shows generated samples under two different $\gamma_t$ strategies. 
(a) visualizes the training distributions, with auxiliary anchor samples (orange) clearly separated from the target-prior samples (purple) that overlap with the target region. 
(b) shows samples generated using fixed $\gamma_t$ interpolation $p_\text{lerp}(x\mid \tilde{c}_A, \tilde{c}_T;\gamma_t=0.8)$, which linearly combines the anchor and target-prior score functions. 
(c) demonstrates our adaptive interpolation $p(x\mid \tilde{c}_A,\tilde{c}_T; \gamma_t^*)$, where $\gamma_t^*$ is dynamically optimized at each denoising step. 
(d) reports 2-Wasserstein distance between the generated distributions and the target $\mathcal{N}((0,3), 1.5I)$ as a function of $\gamma_t$. 
The blue curve shows that fixed interpolation achieves its minimum distance around $\gamma_t \approx 0.8$. 
In contrast, the adaptive method (red dashed line) consistently achieves a lower distance, confirming the benefit of step-wise optimization.

The key insight from this toy example is that the optimal interpolation parameter $\gamma_t^*$ varies across the denoising process. 
Fixed interpolation requires manual tuning and remains suboptimal, while our adaptive approach automatically balances target fidelity and anchor stability at each step, leading to a more accurate approximation of the target distribution.

\section{Full Algorithm}
\label{sec:full_alg}

Let $s_\theta(x_t,t,\cdot)$ denote a pretrained score estimator (either unconditional or conditional) evaluated at state $x_t$ and diffusion/flow timestep $t$.
We write $s_\theta(x_t,t) \equiv s_\theta(x_t,t,\varnothing)$ for the unconditional score, and $s_\theta(x_t,t,\tilde{c})$ for the score conditioned by prompt $\tilde{c}$.
Classifier-Free Guidance (CFG) combines unconditional and conditional scores linearly with a guidance scale $w$.
Following the main text, the blended conditional score is defined as
\begin{equation}
s_\theta(x_t,t,\tilde{c}) \;=\; (1-\gamma_t)\,s_\theta(x_t,t,\tilde{c}_T) \;+\; \gamma_t\,s_\theta(x_t,t,\tilde{c}_A),
\label{eq:blended_conditional}
\end{equation}
and the final guided score is given by
\begin{equation}
\tilde{s}_\theta(x_t;t,w,\gamma_t) \;=\; s_\theta(x_t,t) \;+\; w\,\!\left(s_\theta(x_t,t,\tilde{c}) - s_\theta(x_t,t)\right).
\label{eq:final_guided}
\end{equation}
Using Tweedie’s identity under the Gaussian corruption model, the image-domain denoising loss is equivalent to a score-space $\ell_2$ loss, which yields a closed-form per-timestep optimum $\gamma_t^{\*}$ (see main text, Eq.~\eqref{eq:image_domain_loss} -- \eqref{eq:final}).

\begin{algorithm}[t!]
\begin{algorithmic}[1]
\Require Target prompt $\tilde{c}_T$, Anchor prompt $\tilde{c}_A$, pre-trained score model $s_\theta$, guidance scale $w$, timesteps $T$
\Ensure Generated image $x_0$

\State $x_T \sim \mathcal{N}(0, \mathbf{I})$ 
\For{$t = T$ \textbf{down to} $1$}
    \State $s_u \gets s_\theta(x_t, t)$ \Comment{Unconditional score}
    \State $s_T \gets s_\theta(x_t, t, \tilde{c}_T)$ \Comment{Target-conditioned score}
    \State $s_A \gets s_\theta(x_t, t, \tilde{c}_A)$ \Comment{Anchor-conditioned score}
    
    \Statex
    \State $\gamma_t^* \gets \frac{1-w}{w} \cdot \frac{\langle s_T - s_u, s_A - s_T \rangle}{\|s_A - s_T\|_2^2}$ \Comment{Eq.~\eqref{eq:final}}
    
    \Statex
    \State $\tilde{s}_\theta \gets s_u + w \left( (1-\gamma_t^*)s_T + \gamma_t^*s_A - s_u \right)$
    
    \Statex
    \State $x_{t-1} \gets \text{SamplerStep}(x_t, \tilde{s}_\theta, t)$ 
\EndFor
\State \Return $x_0$
\end{algorithmic}
\caption{Adaptive Auxiliary Prompt Blending (AAPB) Generation}
\label{alg:aapb}
\end{algorithm}

\section{Robustness across Various Diffusion Models.}
\label{sec:various_baselines}
Tab.~\ref{tab:sdxl} shows the robustness across three different pre-trained diffusion models on RareBench, including SDXL, IterComp~\cite{zhang2024itercomp}, and SD3.0.
Our method yields substantial performance improvements compared to R2F across all models, achieving consistent average gains of $+12.7$, $+17.0$, and $+8.4$ on SDXL, IterComp, and SD3.0, respectively.
These results demonstrate that our method generalizes effectively across different architectures and training settings, confirming its robustness and broad applicability.

As illustrated in Fig.~\ref{fig:qual_different_architectures}, our AAPB framework consistently preserves the semantic meaning of the input prompt across diverse pre-trained diffusion backbones, including SDXL, IterComp, and SD3.0.
While baseline models often exhibit incomplete attribute binding or geometric distortion in rare or complex prompts, our method accurately conveys the intended semantics while maintaining visual coherence and structural realism.
This demonstrates that adaptive prompt blending effectively generalizes across architectural variations, allowing the model to maintain semantic faithfulness even under different backbone configurations.

\begin{figure}[t!]
    \centering
    \includegraphics[width=0.8\linewidth]{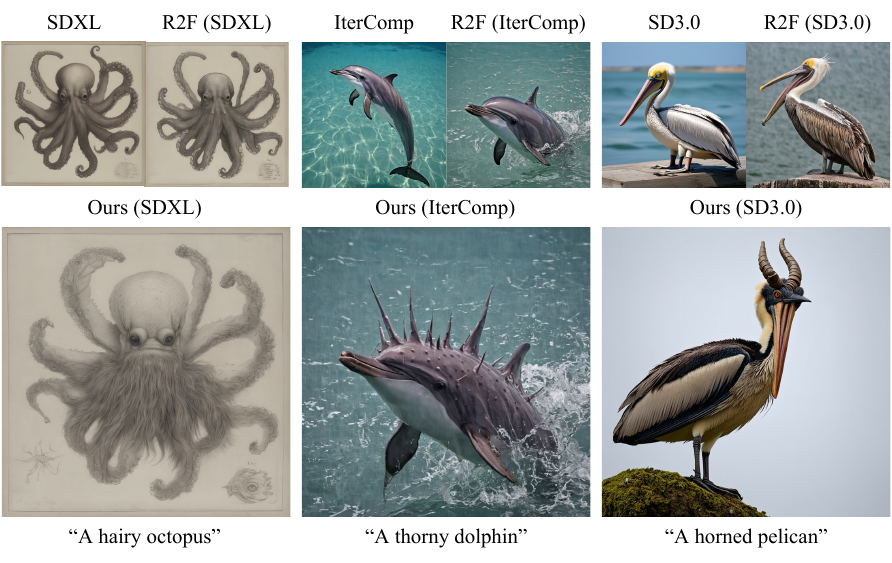}
    \caption{Qualitative comparison on different pre-trained diffusion baselines, SDXL, SD3.0, and IterComp. Comparing our method integrated with these pre-trained models.}
    \label{fig:qual_different_architectures}
\end{figure}

\begin{figure}[t!]
    \centering
    \includegraphics[width=1\linewidth]{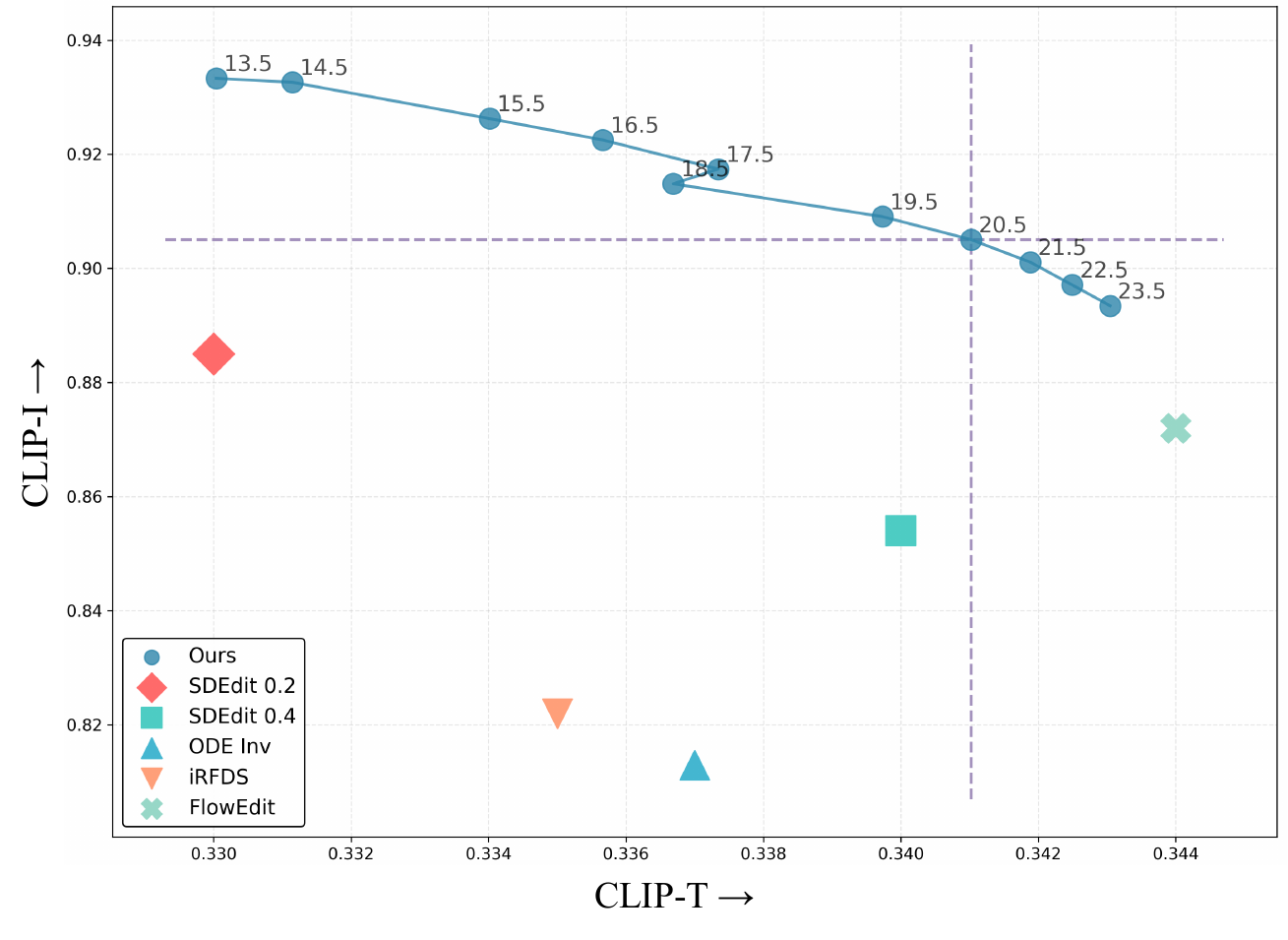}
    \caption{Quantitative comparison on varying classifier-free guidance scale $w$ on the FlowEdit dataset.}
    \label{fig:varying_cfg_flowedit}
\end{figure}

\begin{figure}[t!]
 \centering
 \includegraphics[width=0.8\linewidth]{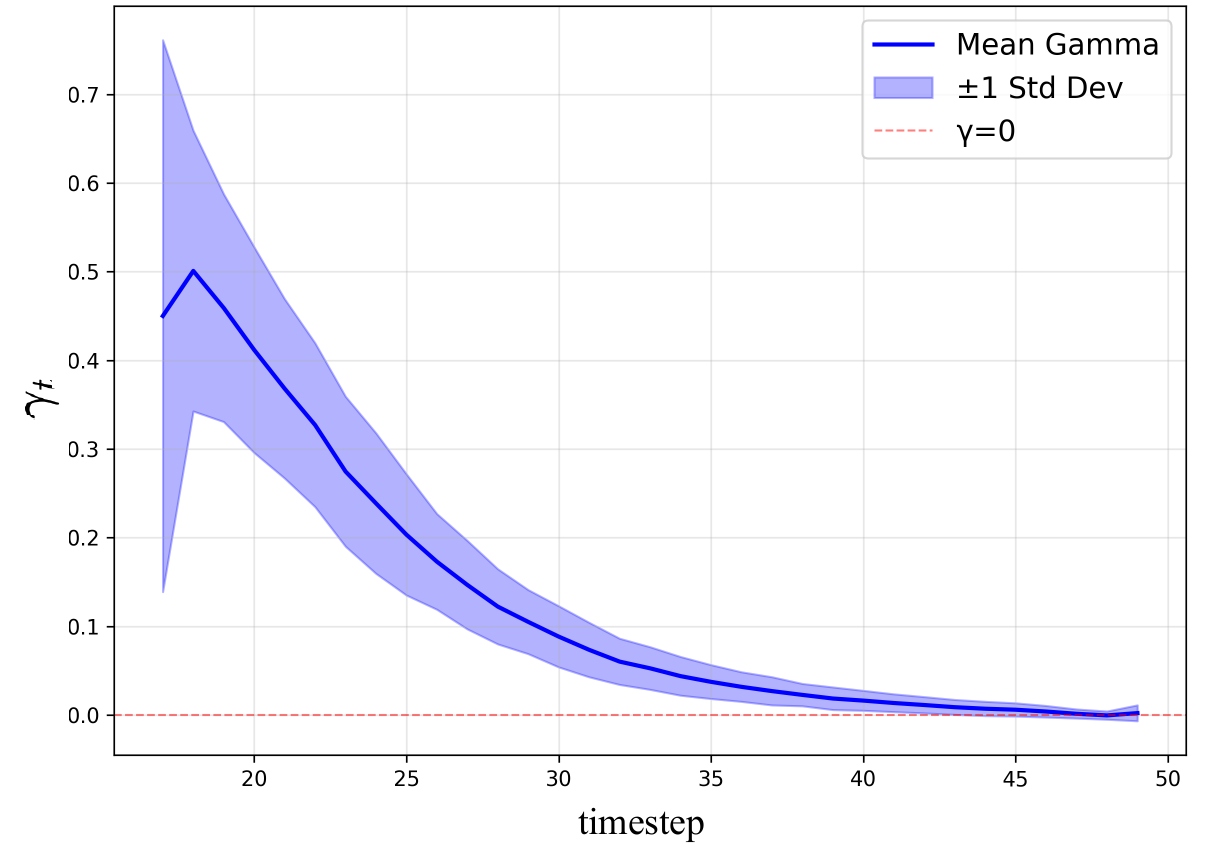}
 \caption{Evolution of the adaptive coefficient $\gamma_t^*$ across diffusion timesteps in image editing (FlowEdit). The coefficient exhibits clear saturation in later steps as structural guidance from the source image becomes dominant.}
 \label{fig:gamma_t_progress}
\end{figure}

\begin{figure}
    \centering
    \includegraphics[width=0.8\linewidth]{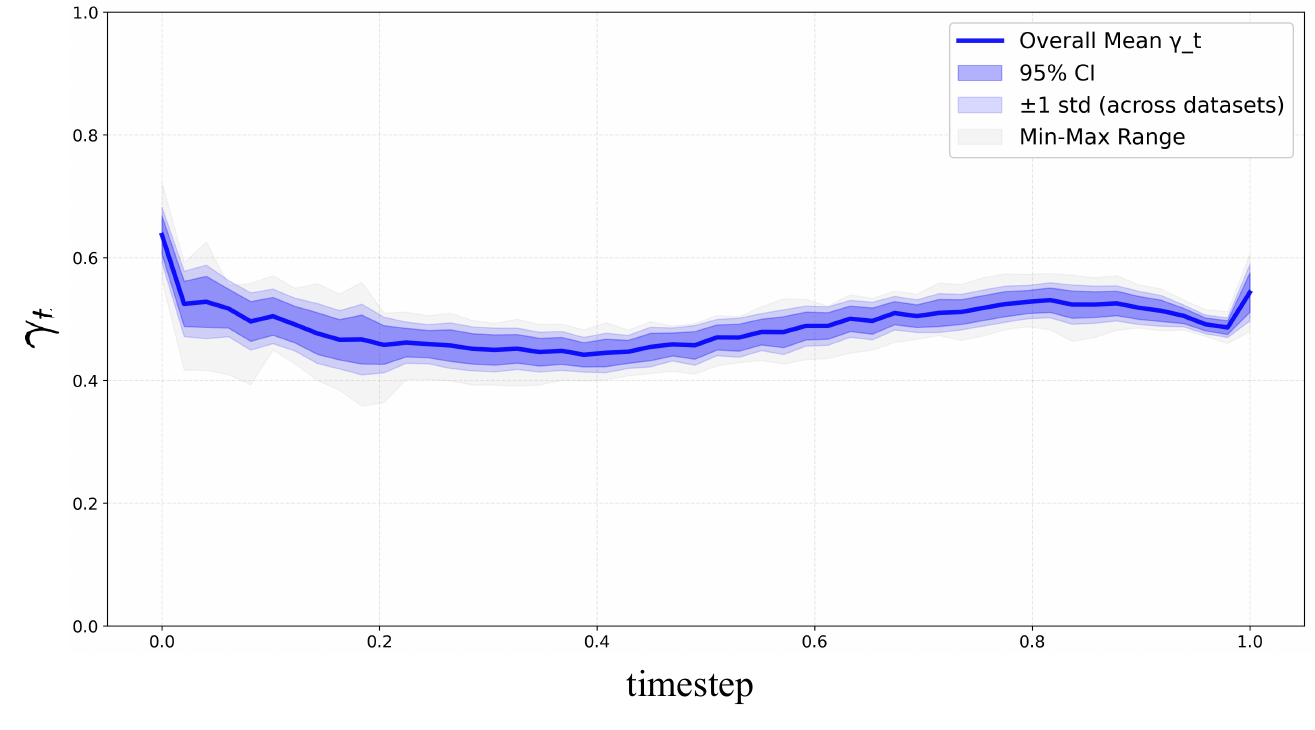}
    \caption{Evolution of the adaptive coefficient $\gamma_t^*$ across diffusion timesteps in rare concept generation (RareBench). Unlike image editing, $\gamma_t^*$ maintains stable values throughout the denoising process, reflecting continuous need for anchor stabilization.}
    \label{fig:rarebench_gamma_t}
\end{figure}

\section{Varying Classifier-Free Guidance Scale}
\label{sec:varying_cfg_guidance}

For rare concept generation, Fig.~\ref{fig:cfg-guidance-scale} shows qualitative results for varying classifier-free guidance scales $w$.
When $w$ is too small (e.g., $w=1.0$ or $2.0$), the generated images lack visual fidelity and semantic alignment with the input prompt.
Conversely, overly large scales (e.g., $w=9.0$ or $10.0$) lead to noticeable artifacts and structural distortions.
We adopt $w=7.0$ as our default setting, which is also the standard guidance scale used in SD3.
Our adaptive coefficient operates on top of this baseline to further refine the balance between fidelity and semantic consistency.

For image editing, Fig.~\ref{fig:varying_cfg_flowedit} presents a quantitative comparison across different classifier-free guidance scales $w$ in terms of text–image alignment (CLIP-T) and image–image fidelity (CLIP-I).
When $w$ is near 13.5, the model exhibits strong structural preservation of the original image but weak semantic alignment with the target prompt.
Conversely, when $w$ becomes near 23.5 semantic consistency improves marginally, yet structural distortions emerge and the overall gain over SDEdit~0.2 remains limited.
We therefore set $w=20.5$ as a balanced configuration that jointly maximizes both CLIP-T and CLIP-I scores, achieving an optimal trade-off between fidelity and semantic accuracy.

\section{Adaptive Coefficient Progress.}
\label{sec:adaptive_coefficient_progress}
Fig.~\ref{fig:gamma_t_progress} presents the mean adaptive coefficient $\gamma_t^*$ over all samples in image editing, showing a shift from stronger anchor (source prompt) reliance in early noisy steps to more target-conditioned guidance in later stages.
Following the FlowEdit configuration, we report $\gamma_t^*$ only after step 17, as the early steps do not reflect meaningful guidance behavior. 
Geometrically, $\gamma_t^*$ can be viewed as the projection of the target score residual $(s_\theta(x_t,\tilde{c}_T) - s_\theta(x_t))$ onto the anchor--target direction. 
As the diffusion progresses, their dot product approaches zero, implying near-orthogonality between target and anchor gradients. 
This indicates that the target score moves toward a self-consistent manifold region requiring less auxiliary correction. 
Hence, $\gamma_t^*$ naturally decays as the model adaptively balances anchor and target influences, producing stable and controllable generations.

In contrast, Fig.~\ref{fig:rarebench_gamma_t} shows that $\gamma_t^*$ in rare concept generation maintains relatively stable values throughout the denoising process (mean $\approx$ 0.5), without the saturation observed in image editing. 
This difference stems from the fundamental distinction in task structure: image editing benefits from strong structural guidance provided by the source image, which progressively dominates the generation process and reduces the need for anchor support in later steps. 
However, rare concept generation operates without such structural constraints---the model must synthesize images purely from textual descriptions. 
Consequently, the anchor continues to provide essential semantic stabilization throughout the entire denoising trajectory, preventing drift toward high-density regions while maintaining consistent guidance toward the target concept.

\section{Additional Analysis for Anchor Sensitivity Analysis with Anchor Quality Metric}
\label{sec:anchor-quality}

Building upon Sec.~\ref{sec:closed-form adaptive coefficient}, we now analyze what constitutes an effective anchor. Although Eq.~\eqref{eq:final} guarantees optimal blending for any given anchor, different anchor construction strategies (Tab.~\ref{tab:anchor}) exhibit surprisingly competitive performance. This observation suggests that the practical effectiveness of an anchor is not determined solely by its directional alignment with the target–unconditional axis, but also by its overall deviation within the score space.

Substituting $\gamma_t^*$ into Eq.~\eqref{eq:consistency-loss} yields the minimum attainable loss at timestep $t$:
\begin{equation}
\label{eq:min-loss-t}
\mathcal{L}_t^*(s_A) = \|\mathbf{r}\|^2 - \frac{\langle \mathbf{r}, \mathbf{d} \rangle^2}{\|\mathbf{d}\|^2},
\end{equation}
where $\mathbf{r} = (1-w)(s_u - s_T)$ is the fixed residual and $\mathbf{d} = s_A - s_T$ is the anchor displacement. 

To understand how anchor choice affects this loss, we decompose the displacement into components parallel and orthogonal to $\mathbf{r}$:
\begin{equation}
\mathbf{d} = \mathbf{d}^{\parallel} + \mathbf{d}^{\perp}, \qquad
\mathbf{d}^{\parallel} = \frac{\langle s_A - s_T,\, s_u - s_T \rangle}{\|s_u - s_T\|^2}(s_u - s_T).
\end{equation}
Since $\langle \mathbf{r}, \mathbf{d}^{\perp} \rangle = 0$, the numerator of Eq.~\eqref{eq:min-loss-t} depends only on $\mathbf{d}^{\parallel}$, while the denominator grows with both components:
\begin{equation}
\mathcal{L}_t^*(s_A) = \|\mathbf{r}\|^2 - \frac{\langle \mathbf{r}, \mathbf{d}^{\parallel} \rangle^2}{\|\mathbf{d}^{\parallel}\|^2 + \|\mathbf{d}^{\perp}\|^2}.
\end{equation}
This reveals a key trade-off: while $\mathbf{d}^{\parallel}$ provides the guidance signal, excessive $\|\mathbf{d}^{\perp}\|$ dilutes its effectiveness by inflating the denominator.

\begin{table}[t!]
\centering
\small
\begin{tabular}{lcccc}
\toprule
\textbf{Strategy} & \textbf{$\|\mathbf{d}^{\parallel}\|$} & \textbf{$\|\mathbf{d}^{\perp}\|$} & \textbf{Total} $\downarrow$ & \textbf{T2I} $\uparrow$ \\
\midrule
Arbitrary  & 42.2 & 47.9 & 90.1 & 71.0 \\
Objects    & 32.5 & 26.6 & 59.1 & 83.1 \\
Human Generated     & 30.2 & 28.4 & 58.6 & 82.6 \\
GPT-4o     & 27.3 & 28.0 & 55.3 & \textbf{87.9} \\
LLaMA3     & 26.8 & 27.0 & \textbf{53.8} & 81.0 \\
\bottomrule
\end{tabular}
\caption{Anchor quality comparison across different anchor generation strategies. Lower displacement indicates anchors closer to the target with fewer artifacts. T2I scores are evaluated using GPT-4o. All correlations are negative.}
\label{tab:anchor_quality}
\end{table}

Motivated by this insight, we propose a simple anchor quality metric:
\begin{equation}
\text{Disp}(s_A) = \|\mathbf{d}^{\parallel}\| + \|\mathbf{d}^{\perp}\|.
\end{equation}
As shown in Tab.~\ref{tab:anchor_quality}, this displacement metric exhibits a consistent trend with T2I alignment performance. Strategies with moderate displacement (e.g., GPT-4o: 55.3, Human Generated: 58.6, Objects: 59.1) achieve strong T2I scores (82.6--87.9), while anchors with excessive displacement (e.g., Arbitrary: 90.1) show significantly degraded performance (71.0), likely due to orthogonal drift that introduces spurious features from distant regions of the score space.

Crucially, however, geometry is not the sole determinant.
While LLaMA3 achieves the lowest displacement (53.8), it trails GPT-4o (55.3) in T2I accuracy (81.0 vs. 87.9).
This suggests that the alignment between anchor and target semantics—beyond geometric displacement alone—also contributes to performance. GPT-4o's superior language understanding may produce anchors that better preserve task-relevant categorical structure (e.g., substituting ``hairy frog'' with ``hairy animal'' rather than generic ``object''), enabling more effective guidance even at comparable displacement levels. However, this does not require the anchor itself to be semantically detailed or rare; rather, the anchor should maintain appropriate semantic alignment with the target's core attributes.

In summary, effective anchors should satisfy two complementary criteria: (1) maintain a well-calibrated geometric displacement from the target score—close enough to avoid drift into spurious modes, yet sufficiently separated to provide stabilization in low-density regions; and (2) preserve basic semantic alignment with task-relevant attributes, even when substituting rare concepts with frequent ones. By ensuring anchors meet these criteria, AAPB's adaptive coefficient $\gamma_t^*$ can reliably modulate their influence across timesteps, enabling even simple, frequent anchors to perform competitively without requiring rich semantic detail.

\begin{table}[t!]
\centering
\resizebox{0.8\linewidth}{!}{
\begin{tabular}{c|ccc}
\hline
\multirow{2}{*}{Models} & \multicolumn{3}{c}{RareBench Multi}  \\
                        & Concat       & Relation       & Complex                \\ \hline
SD3.0                   & 55.0    & 51.2      & 70.0  \\ 
R2F+        & 74.4     & 63.7     & 64.8              \\
Ours (R2F+)          & 83.8     & 83.6     & 80.3               \\ \hline
\end{tabular}
}
\caption{Quantitative comparison on RareBench-Multi between R2F+ and our method built upon the R2F+ baseline.}
\label{tab:r2f_plus}
\end{table}

\begin{table}[t!]
\centering
\resizebox{1\linewidth}{!}{
\begin{tabular}{c|ccc|cc}
\hline
\multirow{2}{*}{Models} & \multicolumn{3}{c|}{Rare Concept} & \multicolumn{2}{c}{Image Editing} \\
                        & SD3       & R2F       & Ours      & FlowEdit          & Ours          \\ \hline
Peak Memory (GB)        & 31.52     & 31.76     & 32.22     & 19.97             & 19.97         \\
GPU Time (sec)          & 27.14     & 26.12     & 37.94     & 7.41              & 7.54          \\ \hline
\end{tabular}
}
\caption{GPU time and memory required to generate an image.}
\label{tab:memory_analysis}
\end{table}

\section{Results on R2F+ Baseline}
\label{sec:r2f+_baseline}

The R2F+~\cite{park2024rare} framework extends R2F to region-controlled generation, where LLMs extract per-object rare–frequent concept pairs with bounding boxes and visual detail levels. 
Our Adaptive Auxiliary Prompt Blending (AAPB) integrates seamlessly into R2F+’s diffusion pipeline without retraining, operating directly on per-timestep denoising trajectories. 
During alternating region guidance, the adaptive coefficient $\gamma_t$ dynamically modulates frequent-anchor influence to preserve both semantic fidelity and local structure across object-wise generations.

Tab.~\ref{tab:r2f_plus} presents a quantitative comparison between R2F+ and our method built upon the same R2F+ baseline. 
Across all RareBench-Multi categories, our framework consistently outperforms the original R2F+ in text–image alignment accuracy, achieving notable improvements of $+9.4$ on the \textit{Concat}, $+19.9$ on the \textit{Relation}, and $+16.5$ on the \textit{Complex}. 
These results demonstrate that our adaptive auxiliary prompt blending mechanism effectively complements the alternating rare–frequent concept scheduling in R2F+.

\begin{figure}[t!]
    \centering
    \includegraphics[width=1\linewidth]{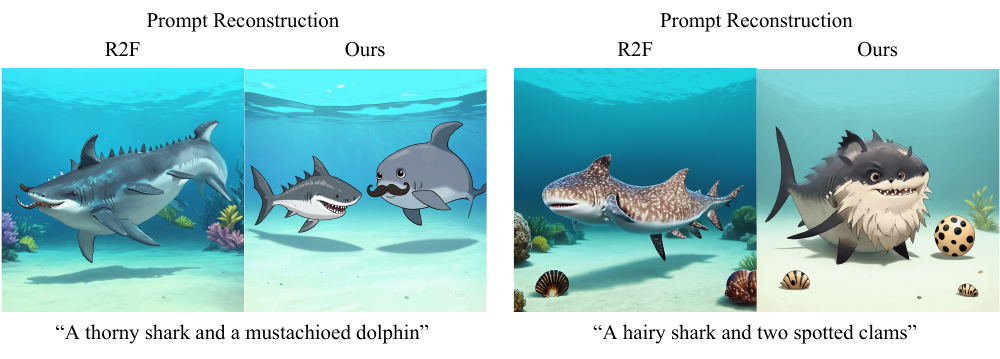}
    \caption{Comparison of prompt reconstruction between R2F and our method. 
    R2F often collapses multiple rare concepts into a single entity, leading to entangled generations. 
    In contrast, our method explicitly preserves each rare concept by directly pairing it with its frequent counterpart, resulting in disentangled and faithful generations.}
    \label{fig:prompt_reconstruction}
\end{figure}

\section{GPU Time and Memory Analysis.}
Tab.~\ref{tab:memory_analysis} reports computational costs measured on an NVIDIA RTX 6000 Blackwell GPU.
For rare concept generation, the peak memory usage remains comparable (33.22GB vs. 31.52GB), indicating that AAPB introduces minimal memory overhead. The longer computation time of AAPB is due to the evaluation of three score functions per timestep (unconditional, target, and anchor) instead of two (unconditional, conditional).
For image editing, the overhead is minimal (7.54s vs. 7.41s) with identical peak memory, as FlowEdit already performs three-branch scores (unconditional, source, and target).
In this case, AAPB simply reuses the source branch as an adaptive anchor, preserving efficiency while enhancing edit faithfulness. 
In return, AAPB delivers a clear performance advantage, significantly surpassing state-of-the-art baselines in both semantic alignment and structural preservation.

\begin{table*}[t!]
\vspace{1mm}
\hrule
\vspace{1mm}

\textbf{$<$Task$>$} \\
Evaluate how well each image matches the given text prompt. Focus on whether the objects in the image and their attributes (e.g., color, shape, texture), spatial arrangement, and actions align with the text.

\vspace{2mm}

\textbf{$<$Rating Scale$>$} \\
5: The image perfectly matches the text prompt with no noticeable mistakes. \\
4: The image matches most of the prompt, with only minor inconsistencies. \\
3: The image reflects the prompt partially, but some important details are missing or incorrect. \\
2: The image shows only a few elements from the prompt, and many key parts are missing or wrong. \\
1: The image does not match the prompt at all or fails to convey the main idea.

\vspace{1mm}
\hrule
\vspace{1mm}
\caption{User study task and rating criteria for evaluating text--image alignment.}
\label{tab:user_study_task}
\end{table*}

\begin{table*}[t]
\centering
\resizebox{0.8\textwidth}{!}{
\centering
\begin{tabular}{l|ccccc|ccc|c}
\hline
\multicolumn{1}{c|}{{}} & \multicolumn{5}{c|}{Single Object}                                                                             & \multicolumn{3}{c|}{Multi Objects} & \multicolumn{1}{c}{}                                                  \\
\multicolumn{1}{c|}{Models}                        & Property      & Shape         & Texture & Action & \begin{tabular}[c]{@{}c@{}}Complex \end{tabular} & Concat & Relation      & \begin{tabular}[c]{@{}c@{}}Complex\end{tabular} & Avg \\ \hline
SD3~\cite{esser2024scaling}  & 46.3  & 58.3 & 46.0 & 43.5 & 53.1 & 40.8 & \underline{46.7} & 60.5 & 49.4 \\
R2F (SD3)~\cite{park2024rare} & \underline{71.9} & \underline{66.2} & \underline{71.4} & \underline{63.7} & \underline{69.3} & \underline{42.6} & 44.6 & \underline{67.3} & \underline{62.1} \\
Ours (SD3) & \textbf{76.7} & \textbf{68.5} & \textbf{71.9} & \textbf{65.0} & \textbf{70.1} & \textbf{59.8} & \textbf{49.7} & \textbf{68.7} & \textbf{66.3}
\\
\hline
\end{tabular}
}
\caption{Qualitative comparison based on our user study. For each prompt, participants were shown multiple images generated by different models in random order and without model names (Fig.~\ref{fig:user_study_screenshot}). Participants selected the preferred result according to the evaluation criteria define in Tab.~\ref{tab:user_study_task}. 
}
\label{tab:user_study_qualitative}
\end{table*}

\section{User Study}
\label{sec:user_study}

We conducted a user study with eleven participants on the RareBench dataset. For each prompt, images generated by Ours, R2F, and SD3 were shown simultaneously to enable direct side-by-side comparison across methods~\cite{sun2023chatgpt}, facilitating a more accurate assessment of semantic alignment and visual consistency. To prevent bias, model names were anonymized and the display order was randomized for every prompt. Participants scored each result using the criteria in Tab.~\ref{tab:user_study_task}, with the $\{1,2,3,4,5\}$ scale linearly mapped to $\{0,25,50,75,100\}$ for analysis (see Fig.~\ref{fig:user_study_screenshot}).

As summarized in Tab.~\ref{tab:user_study_qualitative}, participants frequently noted that our method delivered clearer attribute grounding and reduced compositional inconsistencies, particularly in multi-object scenes. The most pronounced qualitative gains appeared in the Concat and Relation categories, where users consistently favored our results. We attribute this to more explicit entity separation and improved preservation of individual attributes, which enhances visual clarity in complex settings. This finding is consistent with the disentangled prompt reconstruction results in Fig.~\ref{fig:prompt_reconstruction}, demonstrating that our binary alignment strategy better supports multi-entity generation.

\begin{figure}
    \centering
    \includegraphics[width=0.8\linewidth]{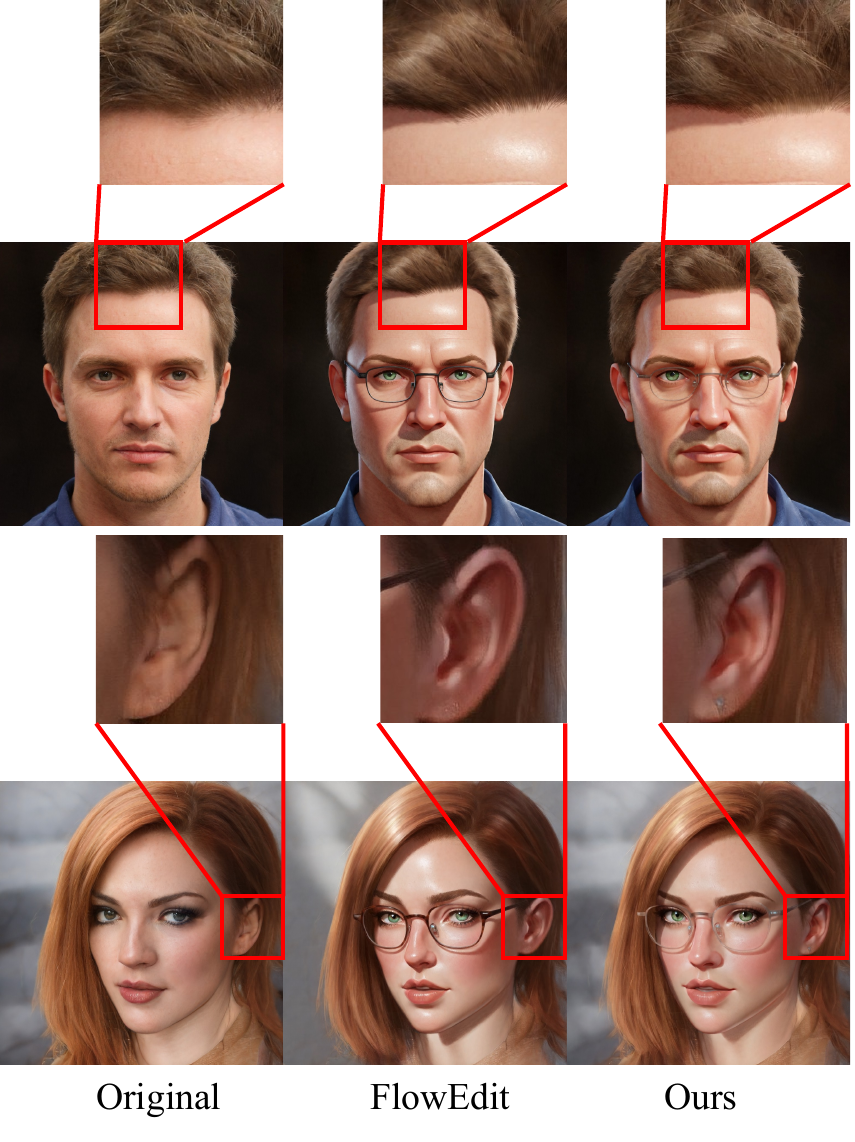}
    \caption{Qualitative results for face-accessory image editing. We extract the source prompt using GPT and augment the target prompt by adding ``glasses'' as an accessory. Our method preserves facial details more faithfully than FlowEdit.}
    \label{fig:FlowEdit_SFHQ}
\end{figure}

\section{LLM Instruction for Rare Concept Generation}
\label{sec:llm_instruction}

Tab.~\ref{tab:aapb_instruction} and Tab.~\ref{tab:aapb_examples_text} detail the full LLM prompt and the in-context examples for AAPB, respectively.
Unlike R2F, which decomposes each rare concept into a \textit{step-wise sequence} with detailed annotations (including an explicit \textit{Visual Detail Level} for heuristic scheduling), our instruction employs a direct binary pairing between rare and frequent concepts.
R2F generates an expanded, hierarchical reasoning trace before concatenating them into the final sequence. 
In contrast, our approach streamlines this process by focusing solely on semantic substitution: it replaces the rare noun phrase with its frequent counterpart in a single pass and directly constructs the rare-frequent pair.

Furthermore, R2F decomposes a prompt into multiple sub-prompts (e.g., ``Two spotted sea creatures'' and ``A hairy aquatic creature''), which are sequentially substituted to form the final sentence. 
In contrast, our formulation enforces a direct rare–frequent pairing (e.g., ``A hairy animal and two spotted objects''), enabling the entire prompt to be reconstructed in a single pass.
In Fig.~\ref{fig:prompt_reconstruction}, we compare the reconstruction strategy of R2F with our binary alignment approach.
Our binary alignment approach resulting in improved disentanglement and fidelity in multi-entity generations.

\section{Image Editing for Face Accessories}

Fig.~\ref{fig:FlowEdit_SFHQ} presents image editing results for adding facial accessories on the SFHQ dataset~\cite{david_beniaguev_2022_SFHQ}. 
We extract the source prompt using GPT and augment the target prompt by appending “glasses’’ as an accessory.
Compared to FlowEdit, our method preserves fine facial details—particularly in the hair and ear regions—while generating the glasses faithfully.

\begin{table}[t!]
\centering
\resizebox{1\linewidth}{!}{
\begin{tabular}{c|ccc}
\hline
Models & LAION-aesthetic & ImageReward & PickScore \\ \hline
SD3.0  & \textbf{6.317±0.190}    & 0.915±0.151      & 21.771±0.462  \\ 
R2F        & 6.215±0.220     & 1.048±0.266     & 22.155±0.239              \\
Ours       & 6.191±0.166     & \textbf{1.123±0.328}     & \textbf{22.177±0.222 }              \\ \hline
\end{tabular}
}
\caption{
Aesthetic and preference evaluation on \textit{RareBench}. 
LAION-aesthetic measures visual appeal, ImageReward models human preference, and PickScore evaluates text–image alignment. 
Our method achieves the best preference and alignment scores.
}
\label{tab:aesthetic}
\end{table}

\section{Quantitative Image Quality Analysis}
\label{sec:Quantitative Image Quality Analysis}

For quantitative image quality evaluation, we employ three widely used metrics: LAION-Aesthetic~\cite{schuhmann2022laion}, PickScore~\cite{kirstain2023pick}, and ImageReward~\cite{xu2023imagereward}.
Tab.~\ref{tab:aesthetic} presents a comparison among AAPB, R2F, and SD3.0.
Our method achieves the highest scores on ImageReward and PickScore, indicating superior human preference alignment and semantic consistency, while SD3.0 attains a slightly higher aesthetic score.
This minor trade-off in visual appeal is acceptable, as AAPB prioritizes rare concept fidelity and prompt-faithful generation over conventional aesthetics—aligning well with the objective of our task.

\begin{figure}[t!]
    \centering
    \includegraphics[width=0.8\linewidth]{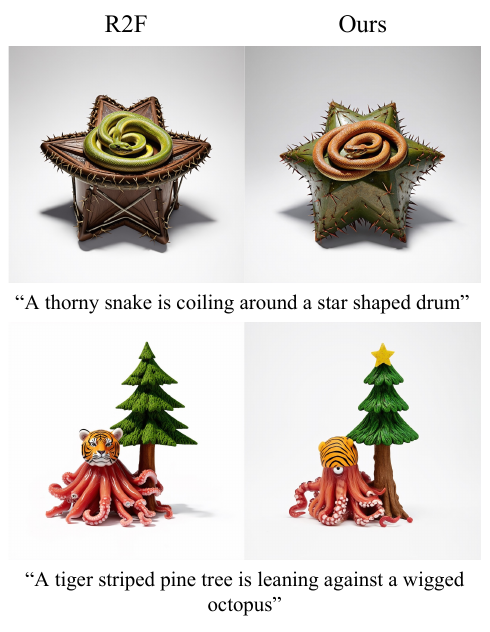}
    \caption{Failure results on RareBench where both ours and R2F exhibit attribute-object mismatches. This behaviors aligns with previously reported limitations in CLIP's ability to bind compositional concpets faithfully~\cite{lewis2022does}.}
    \label{fig:limitation}
\end{figure}

\section{Limitations}
\label{sec:limitations}

Although our approach substantially improves rare-concept alignment, it still inherits the inherent limitations of CLIP-based text encoders, which struggle to preserve compositional bindings when multiple attributes and objects co-exist in a prompt (Fig.~\ref{fig:limitation}). As also reported by Lewis et al.~\cite{lewis2022does}, CLIP often fails to maintain correct attribute–object associations in multi-component scenarios, leading to occasional entanglement or attribute leakage in our generations.
Overcoming this limitation would likely require representation-level advances (e.g., more compositionally robust vision–language encoders) or additional architectural modules explicitly designed for attribute binding, which we leave for future exploration.

\section{More Visualization Results}
\label{sec:more_visualization_results}

Fig.~\ref{fig:varying_seeds_rarebench} presents uncurated generations of our model on RareBench across multiple random seeds.
Most results exhibit strong alignment with the input prompts while preserving naturalness and visual quality.
Furthermore, Fig.~\ref{fig:flowedit_more_results} provides additional qualitative results on the FlowEdit benchmark, where leveraging the blended score yields higher anchor-image fidelity compared to the baseline FlowEdit method.

\clearpage

\begin{table*}[t]
\centering
\small
\begin{tabular}{p{0.98\textwidth}}
\toprule
\textbf{$<$System Prompt$>$} \\
You are a helper language model for a text-to-image generation program that aims to create images based on input text. The program often struggles to accurately generate images when the input text contains rare concepts that are not commonly found in reality. To address this, when a rare concept is identified in the input text, you should replace it with relevant yet more frequent concepts. \\
\\
\textbf{$<$User Prompt$>$} \\
Extract rare concepts from the input text and replace them with relevant yet more frequent ones. Perform the following process step by step: \\
\\
\textbf{a.} Identify and extract any rare concepts from the provided input text. If the text contains one or more rare concepts, extract them all. If there are no rare concepts present, do not extract any concepts. The extracted rare concepts should not overlap. \\
\\
\textbf{b.} Given the rare concepts extracted at a., replace each extracted rare concept with a more frequent concept. Specifically, replace the main noun subject with a more frequent noun subject that is likely to appear in a similar context. Ensure that the replaced frequent noun subject retains the properties of the original main noun subject as much as possible. \\
\\
\textbf{c.} Generate a text sequence that starts from the text with replaced frequent concepts and ends with the text with the original rare concepts. \\
\\
The output should follow the format of the examples below: \\
\midrule
\textbf{$<$In-context Examples$>$} \\
Input: \{USER PROMPT\} \\
Output: \\
\bottomrule
\end{tabular}
\caption{Full LLM instruction for AAPB to generate target-to-anchor concept mappings for rare concept generation.}
\label{tab:aapb_instruction}
\end{table*}

\begin{table*}[t]
\centering
\footnotesize

\begin{tabular}{p{0.98\textwidth}}
\toprule

\textbf{Input:} A banana shaped apple \\
\textbf{Output:} \\
\textbf{Num Rare Concepts:} 1 \\
\quad a. Rare concept: A banana shaped apple \\
\quad b. A banana shaped apple does not exist in reality, while the possibility of a banana shaped red object existing is much higher. Main noun object: apple, Replaced frequent object: red object \\
\quad c. A banana shaped red object BREAK A banana shaped apple \\
\textbf{Final Prompt Sequence:} A banana shaped red object BREAK A banana shaped apple \\
\\
\textbf{Input:} A running dog \\
\textbf{Output:} \\
\textbf{Num Rare Concepts:} 0 \\
\quad No rare concept was found in the given input text. \\
\textbf{Final Prompt Sequence:} A running dog \\
\\
\textbf{Input:} A horned lion and a hairy frog \\
\textbf{Output:} \\
\textbf{Num Rare Concepts:} 2 \\
\quad a. Rare concept: A horned lion \\
\quad b. A horned lion does not exist in reality, while a horned animal does. Main noun subject: lion, Replaced frequent subject: animal \\
\quad c. A horned animal BREAK A horned lion \\
\quad AND \\
\quad a. Rare concept: A hairy frog \\
\quad b. A hairy frog does not exist in reality, while a hairy animal does. Main noun subject: frog, Replaced frequent subject: animal \\
\quad c. A hairy animal BREAK A hairy frog \\
\textbf{Final Prompt Sequence:} A horned animal AND a horned animal BREAK A horned lion AND a hairy frog \\
\\
\textbf{Input:} A horned lion is sitting on a tiger striped rock \\
\textbf{Output:} \\
\textbf{Num Rare Concepts:} 2 \\
\quad a. Rare concept: A horned lion \\
\quad b. A horned lion does not exist in reality, while a horned animal does. Main noun subject: lion, Replaced frequent subject: animal \\
\quad c. A horned animal BREAK A horned lion \\
\quad AND \\
\quad a. Rare concept: a tiger striped rock \\
\quad b. A tiger striped rock does not exist in reality, while a tiger striped object does. Main noun subject: rock, Replaced frequent subject: object \\
\quad c. a tiger striped object BREAK a tiger striped rock \\
\textbf{Final Prompt Sequence:} A horned animal AND a tiger striped object BREAK A horned lion AND a tiger striped rock \\

\bottomrule
\end{tabular}
\caption{In-context examples of the LLM prompt for AAPB.}
\label{tab:aapb_examples_text}
\end{table*}

\begin{figure*}
    \centering
    \includegraphics[width=0.8\linewidth]{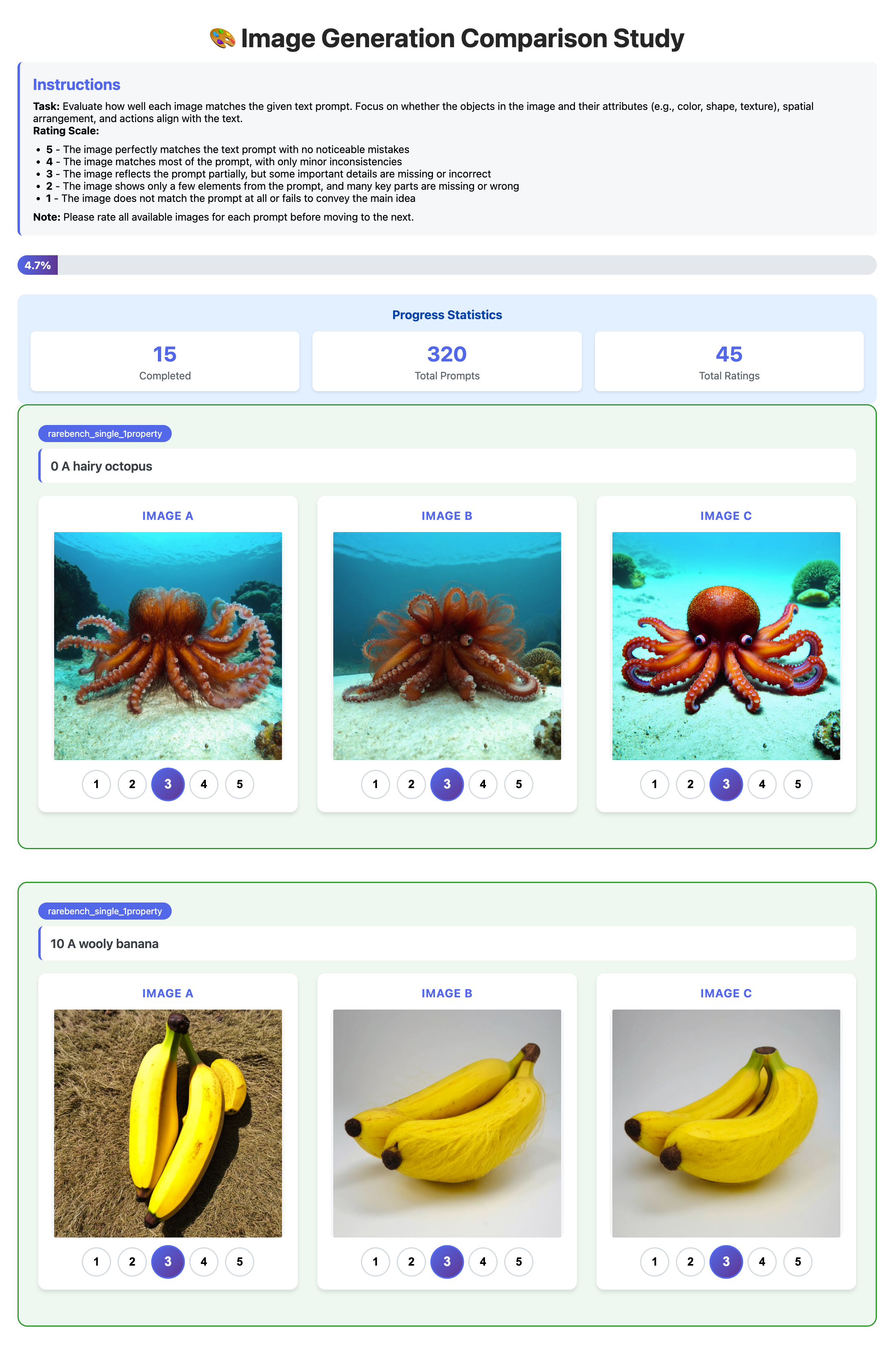}
    \caption{User study interface for evaluating text-to-image alignment. Participants were asked to rate how well each generated image matched the given text prompt based on semantic accuracy and visual consistency. The image order was randomized, and model identities were hidden to avoid bias.}
    \label{fig:user_study_screenshot}
\end{figure*}

\begin{figure*}
    \centering
    \includegraphics[width=1\linewidth]{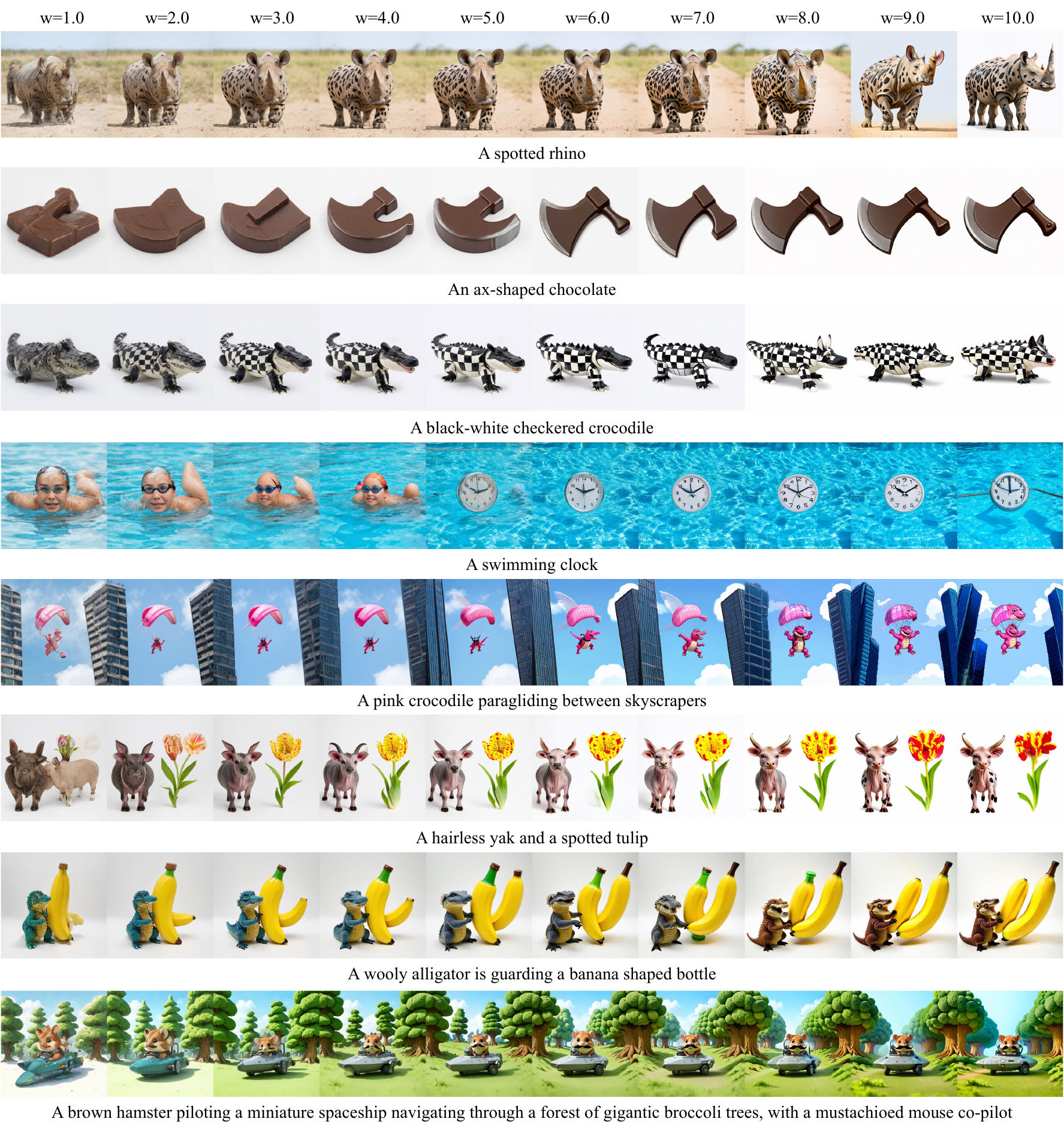}
    \caption{Ablation study with varying classifier-free guidance scale $w$.}
    \label{fig:cfg-guidance-scale}
\end{figure*}

\begin{figure*}
    \centering
    \includegraphics[width=0.95\linewidth]{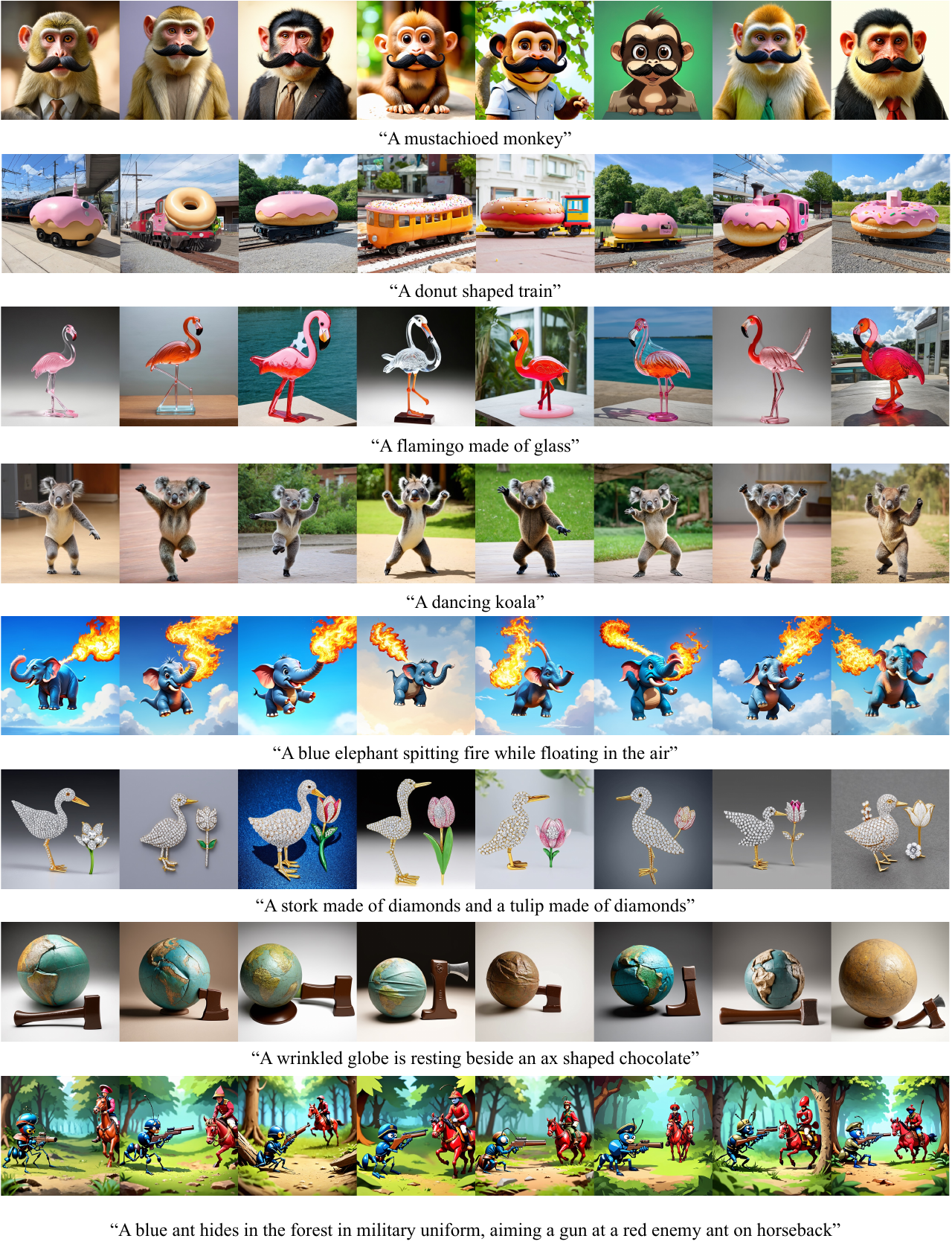}
    \caption{Uncurated qualitative visualizations of our method on RareBench, generated from eight randomly selected prompts across all categories and eight random seeds.}
    \label{fig:varying_seeds_rarebench}
\end{figure*}

\begin{figure*}
    \centering
    \includegraphics[width=1\linewidth]{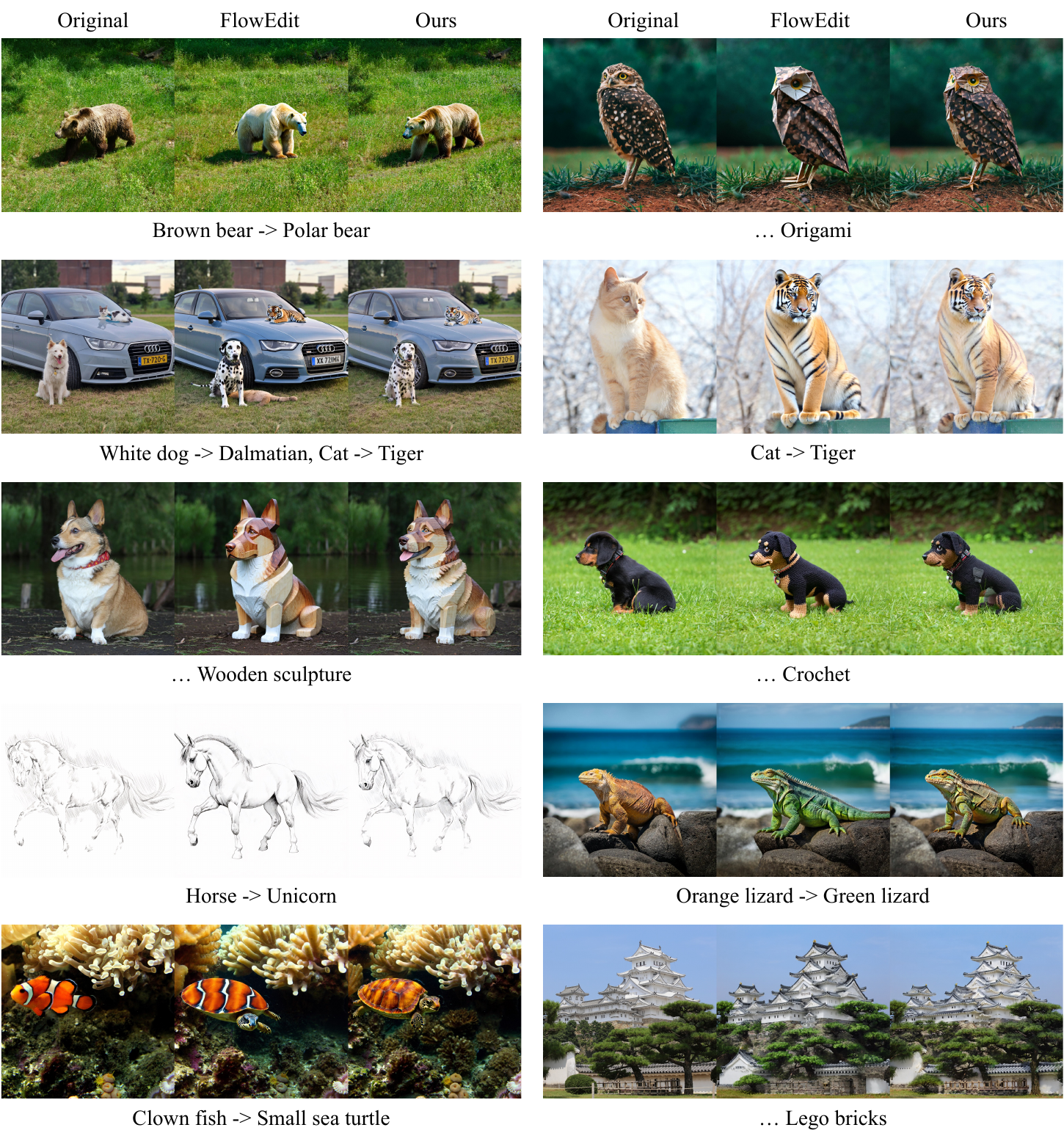}
    \caption{Qualitative results of the image editing.}
    \label{fig:flowedit_more_results}
\end{figure*}

\end{document}